\algrenewcommand\algorithmicrequire{\textbf{Input}}
\algrenewcommand\algorithmicensure{\textbf{Output}}
\algnewcommand{\IIf}[2]{\State\algorithmicif\ #1\ \algorithmicthen\ #2}
\algnewcommand{\IElse}[1]{\State\algorithmicelse\ #1}
\algnewcommand{\IElseIf}[2]{\State\algorithmicelse\ \algorithmicif\ #1\ \algorithmicthen\ #2}
\patchcmd{\ALG@doentity}{\item[]\nointerlineskip}{}{}{}
\DeclareFontFamily{U}{matha}{\hyphenchar\font45}
\DeclareFontShape{U}{matha}{m}{n}{ <-6> matha5 <6-7> matha6 <7-8>
matha7 <8-9> matha8 <9-10> matha9 <10-12> matha10 <12-> matha12 }{}
\DeclareSymbolFont{matha}{U}{matha}{m}{n}
\DeclareFontFamily{U}{mathx}{\hyphenchar\font45}
\DeclareFontShape{U}{mathx}{m}{n}{ <-6> mathx5 <6-7> mathx6 <7-8>
mathx7 <8-9> mathx8 <9-10> mathx9 <10-12> mathx10 <12-> mathx12 }{}
\DeclareSymbolFont{mathx}{U}{mathx}{m}{n}
\DeclareMathDelimiter{\liv} {4}{matha}{"76}{mathx}{"30}
\DeclareMathDelimiter{\riv} {5}{matha}{"77}{mathx}{"38}
\tikzset{tips=proper,edge/.style = {->,>=latex'},lbl/.style={draw=none,font={\footnotesize\ttfamily}}}
\pgfplotsset{compat=newest}
\newcommand{\implws}{\textvisiblespace}
\definecolor{palette-orange}{HTML}{ff3a20}
\definecolor{palette-green}{HTML}{5b8c5a}
\definecolor{palette-blue}{HTML}{0e79b2}
\definecolor{palette-yellow}{HTML}{f5b700}
\definecolor{palette-dgreen}{HTML}{1e2f23}
\definecolor{palette-purple}{HTML}{331832}
\definecolor{dark gray}{HTML}{808080}
\definecolor{darker gray}{HTML}{606060}
\definecolor{palette-alt-green}{HTML}{084c61}
\definecolor{palette-alt-pink}{HTML}{d11149}
\definecolor{palette-alt-yellow}{HTML}{e3b505}
\definecolor{palette-alt-purple}{HTML}{331832}
\definecolor{palette-alt-blue}{HTML}{01baef}
\definecolor{palette-alt-gray}{HTML}{808f85}
\pgfplotsset{select coords between index/.style 2 args={
    x filter/.code={
        \ifnum\coordindex<#1\def\pgfmathresult{}\fi
        \ifnum\coordindex>#2\def\pgfmathresult{}\fi
    }
}}
\newtheorem{theorem}{Theorem}[section]
\theoremstyle{definition}
\newtheorem{definition}[theorem]{Definition}
\newtheorem{assumption}[theorem]{Assumption}
\newtheorem{problem}[theorem]{Problem}
\theoremstyle{remark}
\newif\ifcomments
    \newcommand{\kareem}[1]{\textbf{\textcolor{magenta}{{Kareem: #1}}}}
    \newcommand{\guy}[1]{{\bfseries\color{red}Guy: #1}}
    \newcommand{\renato}[1]{{\bfseries\color{cyan}Renato: #1}}
    \newcommand{\benjie}[1]{{\bfseries\color{green}Benjie: #1}}
    \newcommand{\honghua}[1]{{\bfseries\color{orange}Honghua: #1}}
    \newcommand{\kareem}[1]{}
    \newcommand{\guy}[1]{}
    \newcommand{\renato}[1]{}
    \newcommand{\benjie}[1]{}
    \newcommand{\honghua}[1]{}
\newcommand{\mpe}{\textnormal{MLT}}
\newcommand{\partition}{\textnormal{MSP}}
\newcommand{\llmclass}{\mathcal{L}}
\newcommand{\llmdist}{p}
\newcommand{\vocab}{\mathcal{V}}
\newcommand{\sentence}{\bm{x}}
\newcommand{\character}{x}
\newcommand{\numtokens}{m}
\newcommand{\token}{v}
\newcommand{\tokens}{\bm{v}}
\newcommand{\answer}{\bm{a}}
\newcommand{\question}{\bm{c}}
\newcommand{\variable}{a}
\newcommand{\literal}{l}
\newcommand{\literalidx}{I}
\newcommand{\literalsign}{P}
\newcommand{\clause}{c}
\newcommand{\cnf}{\psi}
\newcommand{\satclause}{S}
\newcommand{\proposaldist}{q}
\newcommand{\wrt}{w.r.t.\ }
\newcommand{\defeq}{\vcentcolon=}
\newcommand{\llama}{\texttt{Llama2}}
\newcommand{\gemma}{\texttt{Gemma}}
\newcommand{\mamba}{\texttt{Mamba}}
\def\Ddots{\mathinner{\mkern1mu\raise\p@
\vbox{\kern7\p@\hbox{.}}\mkern2mu
\raise4\p@\hbox{.}\mkern2mu\raise7\p@\hbox{.}\mkern1mu}}
\newcommand{\mquote}[1]{\ensuremath{\text{``} #1 \text{''}}}
\title{Where is the signal in tokenization space?}
\author{Renato Lui Geh, \quad Honghua Zhang, \\ \textbf{Kareem Ahmed,} \quad  \textbf{Benjie Wang,} \quad \textbf{Guy Van den Broeck}\\
        University of California, Los Angeles\\\texttt{\{renatolg,hzhang19,ahmedk,benjiewang,guyvdb\}@cs.ucla.edu}}
\begin{document}
\maketitle
\begin{abstract}
Large Language Models~(LLMs) are typically shipped with tokenizers that \emph{deterministically} encode text into so-called \emph{canonical} token sequences, to which the LLMs assign probability values.
One common assumption is that the probability of a piece of text is the probability of its canonical token sequence.
However, the tokenization of a string is not unique: e.g., the \llama{} tokenizer encodes \texttt{Tokens} as \texttt{[Tok,ens]}, but \texttt{[Tok,en,s]} also represents the same text.
In this paper, we study non-canonical tokenizations. %
We prove that, given a string, it is computationally hard to find the most likely tokenization for an autoregressive LLM, as well as to compute the marginal probability over all possible tokenizations.
We then show how the marginal is, in most cases, indistinguishable from the canonical probability.
Surprisingly, we then empirically demonstrate the existence of a significant amount of signal hidden within tokenization space.
Notably, by simply aggregating the probabilities of non-canonical tokenizations, we achieve improvements across a range of LLM evaluation benchmarks for a variety of architectures, including transformers and state space models.

\end{abstract}

\section{Introduction}

Autoregressive large language models (LLMs) generate text by predicting the next word sequentially.
A crucial yet often overlooked step in this process is \emph{tokenization}, whereby each
word is broken down into subwords.
It allows the model to generate text beyond what it was trained on, enabling open-vocabulary generation.
However, this approach also introduces a significant challenge: a given string can be tokenized in exponentially many ways~(\Cref{fig:num_tokenizations}). For example, this paper's abstract can be tokenized in more than $10^{267}$ ways under the \llama{} vocabulary \cite{llama2}.

\begin{figure}[t]
    \pgfplotstableread[col sep=semicolon,trim cells]{
    x  ; y            ; anc ; label
    13 ; 3632         ; 180 ; ``Tokenizations''
    18 ; 54480        ; 175 ; ``Tokenizations grow''
    26 ; 3759120      ; 175 ; ``Tokenizations grow rapidly''
    31 ; 48868560     ; 351 ; ``Tokenizations grow rapidly with''
    40 ; 8356523760   ; 0 ; ``Tokenizations grow rapidly with sentence''
    47 ; 350973997920 ; 0   ; ``Tokenizations grow rapidly with sentence length''
}\data
\begin{tikzpicture}
    \begin{axis}[
        ymode=log,
        ytick distance=100,
        width=\columnwidth,
        xlabel={Sentence length},
        ylabel={\# of tokenizations},
        xmin=5,
        xmajorgrids=true,
        ymajorgrids=true,
        grid style=dashed,
        width=\columnwidth,
    ]

    \addplot[
        color=palette-blue,
        mark=square,
        very thick,
        visualization depends on=\thisrow{anc} \as \anc,
        nodes near coords,
        point meta=explicit symbolic,
        every node near coord/.style={font=\tiny\sf\bfseries,anchor=\anc},
    ] table [
        meta index=3
    ] {\data};
    \end{axis}
\end{tikzpicture}
    \caption{\textbf{Exponential growth of the number of tokenizations.} The (log-scale) $y$-axis shows the number of tokenizations as a function of the sentence length.}
    \label{fig:num_tokenizations}
\end{figure}
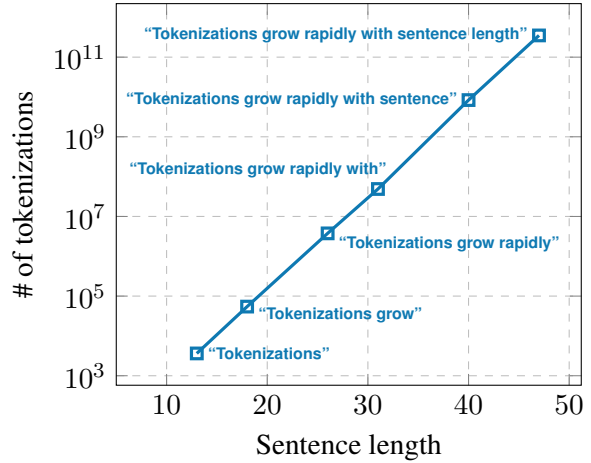

While a given string can be tokenized in multiple ways, at inference time, almost all 
successful modern LLMs utilize a fixed, or \emph{canonical}, tokenization: a deterministic, rule-based mapping from text to token sequences \citep{gage1994BPE}.
Consequently, it has become commonplace to use this token sequence as a proxy for the underlying text. In particular, the probability of the token sequence is often used in place of the probability of the text (e.g. for evaluation metrics like perplexity),
even though these quantities are not necessarily equal.
To complicate matters, some language models are pretrained with 
\emph{stochastic tokenizations}~\citep{kudo2018subword, provilkov2020bpe}, exposing them to multiple ways of tokenizing the same string, with the hope of obtaining models with a more developed
understanding of the compositionality of words.

In this paper, in the context of modern LLMs, we ask whether %
non-canonical tokenizations of a string
can provide additional signal \emph{at inference time}, which would be lost by considering just the canonical tokenization. To this end, we investigate two natural alternatives: finding the \emph{most likely tokenization} and \emph{marginalizing over tokenizations}.
For example, one natural way to answer a multiple-choice question is to choose the answer with the highest probability conditioned on the question~\cite{hellaswag}; instead of always assuming the canonical tokenizations of the answers, one could compare answers based on the probability of their most likely tokenizations, or alternatively the marginal probability of all possible tokenizations. 

We first study the problem of finding the most likely tokenization and show that it is NP-hard under some mild assumptions. As such, we propose an anytime branch-and-bound algorithm to approximate the most likely tokenization. We find that, for text lengths where the branch-and-bound strategy is practical, the canonical tokenization is usually the most likely one.

Then we ask the question of whether there is a significant amount of probability mass concentrated on tokenizations \emph{other} than the canonical. We first observe that as we sample token sequences of varying length from the LLM distribution \emph{unconditionally}, the proportion of canonical tokenizations decreases significantly as the sequence length increases.
To further investigate this phenomenon, ideally one would need to compute the marginal probability of all tokenizations for a given string, which we show to be \#P-hard. Hence, we implement an importance sampling estimator for the marginal probability. Surprisingly, despite the extremely large number of non-canonical tokenizations, we empirically find that the estimated marginal probability is usually very close to the canonical tokenization's probability.

This raises our last question: does the complete tokenization space add any meaningful signal at all, in addition to the canonical tokenization alone? Remarkably, we show that, even for the cases where there is little probability mass on non-canonical tokenizations, they seem to carry \emph{some} meaningful signal.
Specifically, we show that for \gemma{}-2B \cite{gemma}, \llama{}-7B \cite{llama2} and \mamba{}-130M \cite{mamba}, by employing ensemble strategies for weighting different tokenizations at inference time, we achieve significant performance improvements on challenging LLM evaluation benchmarks.%

\paragraph{Contributions.} In summary, we show that: 
(i)~while it is tempting to consider computing the marginal probability of a string, this quantity is \#P-hard to compute;
(ii)~in fact, even computing the probability of the most likely tokenization is NP-hard; and
(iii)~while in most cases the marginal probability of a string is practically the same as the canonical probability, non-canonical tokenizations seem to provide some signal to downstream tasks, to the point that we achieve consistent improvement across a range of open source models on Q\&A datasets.

\section{Related Work}

Many previous works have explored tokenization strategies within the LLM pipeline, and the (often undesirable) inductive biases they may introduce: for example, in introducing unfairness between languages~\cite{petrov2023language}, gender bias~\cite{ovalle24}, and in performing arithmetic~\cite{singh2024tokenization}.
Some recent works have avoided the many downsides of tokenization by employing byte-level models, but either suffer from slow decoding due to longer sequences \cite{megabyte}, or rely on token-level models for more efficient generation \cite{mambabyte}.
To overcome the limitations of tokenization, prior works have examined (approximately) marginalizing over the distribution of possible token sequences~\cite{buckman18, cao21,chirkova23}. In this work, we analyze modern LLMs and consider multiple strategies for extracting information from tokenization space; finding that, contrary to prior belief, the signal is present not in the most-likely tokenization or (approximated) marginals, but rather in a mixture of canonical and non-canonical tokenizations.

\section{An LLM Induces a Distribution over Tokenizations}

Let $\sentence = (\character_1, \character_2, \ldots)$ denote a string (a sequence of characters).
A vocabulary $\vocab$ is a set of strings that represent subwords, or \emph{tokens}.
A \emph{token sequence} \wrt a vocabulary $\vocab$ is a sequence $\tokens = (\token_1, \token_2, \dots)$ where each $\token_i \in \vocab$.
A \emph{tokenization} of string $\sentence$  \wrt a vocabulary $\vocab$  is a token sequence \wrt $\vocab$ such that the concatenation of all tokens is equal to $\sentence$.
Simply put, a tokenization breaks down a string into substrings, each
recognized by the vocabulary.
The substrings are ordered by their position in the original string.
We write $\tokens \models_\vocab \sentence$ to denote that token sequence $\tokens$ is a tokenization of string $\sentence$ \wrt the vocabulary $\vocab$, sometimes omitting $\vocab$ when meaning is clear.

An autoregressive LLM $\llmdist$ defines a conditional probability distribution $\llmdist(\token_i | \token_1, \dots\token_{i-1})$ over tokens from its vocabulary $\vocab$.
Thus, an LLM \emph{induces a distribution over tokenizations of a given string}.

\begin{definition}[Induced Tokenization Distribution] \label{def:induced}
    Let $\sentence$ be a string, $\tokens$ a token sequence, and $\llmdist$ an LLM over vocabulary $\vocab$. Then, the tokenization distribution induced by~$\llmdist$~is
    \begin{equation*}
        \llmdist(\tokens,\sentence) = 
        \begin{cases}
             \prod_{i=1}^{|\tokens|} \llmdist\left(\token_i | \token_1,\dots, \token_{i-1}\right) \text{~if~} \tokens \models_\vocab \sentence,\\
             0 \quad \text{otherwise.}
        \end{cases}
    \end{equation*}
\end{definition}

\begin{figure}[t]
    \begingroup
    \begin{tikzpicture}[every node/.style={draw,circle,inner sep=0pt,minimum size=12pt}]
        \node (n0) at (0, 0) {1};

        \node (n1) at ($(n0) + (0.75, -1.5)$) {2};
        \node (n2) at ($(n1) + (0.75, -1.0)$) {3};
        \node (n3) at ($(n0) + (0.0, -3.0)$) {4};
        \node (n4) at ($(n2) + (1.0, -1.5)$) {5};
        \node[rectangle] (n5) at ($(n3) + (-1.0, -1.5)$) {6};

        \draw[edge] (n0) to node[lbl,above right] {\implws} (n1);
        \draw[edge] (n0) to node[lbl,left,pos=0.5] {\implws\,Bi} (n3);
        \draw[edge] (n0) to ++(1.5, -0.5) -- node[lbl,pos=0.35,right] {\implws\,B} (n2);
        \draw[edge] (n1) to node[lbl,above left,pos=0.5] {Bi} (n3);
        \draw[edge] (n1) to node[lbl,above right] {B} (n2);
        \draw[edge] (n0) to ++(2.5, -0.5) -- node[lbl,pos=0.5,right] {\implws\,Bir} (n4);
        \draw[edge] (n4) to node[lbl,below right] {d} (n5);
        \draw[edge] (n0) to ++(-1.0, -0.5) -- node[lbl,pos=0.5,left] {\implws\,Bird} (n5);
        \draw[edge] (n2) to node[lbl,pos=0.5,above right] {ir} (n4);
        \draw[edge] (n2) to node[lbl,pos=0.5,above] {i} (n3);
        \draw[edge] (n3) to node[lbl,pos=0.4,above left] {rd} (n5);
        \draw[edge] (n3) to node[lbl,pos=0.65,above right] {r} (n4);
        \draw[edge] (n2) to ++(-0.5, -1.25) -- node[lbl,pos=0.15,above left] {\implws\,ird} (n5);
\end{tikzpicture}
    \endgroup
    \centering
    \caption{\textbf{Multi-valued decision diagram for the tokenization of \texttt{Bird}}. The square is a terminal node.}
    \label{fig:Bird}
\end{figure}
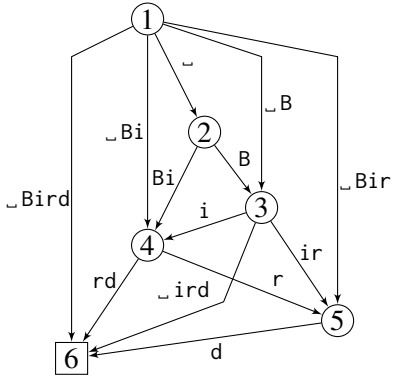

Most modern LLMs make use of tokenizers based on Byte-Pair Encoding (BPE)~\citep{gage1994BPE},
whereby the token vocabulary is initialized with the character vocabulary, and a \emph{merge table} is
initialized to be empty.
The method then iteratively counts all pairs of tokens and merges the most frequent pair into a new token.
    This new token is added to the vocabulary and the merge rule is added to the merge table.
This process is repeated until the desired vocabulary size is reached.
The resulting merge table specifies which tokens are to be merged into larger tokens, as well as the
priority of these merges.
In this way, it defines a \emph{canonical tokenization} procedure as follows:
first, a string is split into its constituent characters,
then, the pair of adjacent tokens with the highest priority merge rule is combined.
This is repeated until no further merge rules from the table are applicable.

\begin{algorithm}[t]
    \caption{\textsc{Compile}}\label{alg:compilation-mdd}
    \begin{algorithmic}[1]
        \Require String $\sentence$, vocabulary $\mathcal{V}$, last token $v$
        \Ensure MDD of all tokenizations of $\sentence$
        \State Initialize memoization $\mathcal{M}:\mathbb{N}\times\mathcal{V}\to\text{MDD}$
        \IIf{$\sentence$ is empty}{\textbf{return} nothing}
        \State Initialize node $N$
        \For{each token $t\in\mathcal{V}$}
            \If{$\sentence$ starts with $t$}
                \IIf{$(|\sentence|,t)\in\mathcal{M}$}{$C\gets\mathcal{M}(|\sentence|,t)$}
                \State \textbf{else} $C\gets\textsc{Compile}(\sentence_{|t|+1:|\sentence|},\mathcal{V},t)$
                \State Add edge from $N$ to $C$ labelled with $t$
                \State Memoize $\mathcal{M}(|\sentence|,t)\gets C$
            \EndIf%
        \EndFor%
        \State \textbf{return} $\mathrm{N}$
    \end{algorithmic}
\end{algorithm}

BPE dropout \cite{provilkov2020bpe} introduces an additional step during training: when tokenizing a word, merge rules are dropped with some probability. After this dropout phase, merges proceed the same way as BPE.
This dropout phase acts as a regularization method that provides robustness to input noise.
It also means that language models trained with BPE dropout should assign more mass to non-canonical tokenizations.

At inference time, for a string $\sentence$, the tokenizer outputs the canonical tokenization $\tokens^\ast$ (without any dropout),
which is then evaluated by the LLM to get the \emph{canonical probability}
$\llmdist(\tokens^\ast,\sentence)$.
Note that this probability is one of an exponential number of tokenization probabilities for a particular string.
In fact, one can compile a Multi-valued Decision Diagram (MDD) \cite{lee59} that represents this combinatorial space for a given string tractably by decomposing and reusing subsequences.
This data structure allows one to compute the total number of tokenizations in linear time in the number of edges of the diagram. \Cref{alg:compilation-mdd} shows how to compile an MDD from a string and \Cref{fig:Bird} shows an example of an MDD compiled from the string \texttt{Bird}.
Each node in the diagram corresponds to a position in the string, and edges from node $i$ to node $j$ are labelled with the corresponding token $\sentence_{i:j} = (x_i,x_{i+1},\dots,x_j)$. Every path going from the root to a terminal node is a tokenization of $\sentence$.

Given what we know so far, a question naturally arises: since we can tractably represent all tokenizations as an MDD, and given that the number of possibilities is exponential, can we efficiently compute the most likely tokenization of a string?
And perhaps more interestingly, is it the canonical one, as often is assumed in practice?

\section{Computing the Most Likely Tokenization is Hard ...}

We begin by noting that there exist simple distributions where finding the most likely tokenization can be done efficiently. For example, if we annotate the edges in an MDD with probabilities, that gives us a tokenization distribution where the most likely tokenization is simply the MDD path with the highest probability. By carefully modifying the MDD, it even becomes possible to efficiently compute the most likely tokenization induced by a bi-gram distribution, where each token depends only on the previous one \cite{lattice,pcsurvey}.

For more complex autoregressive language models, however, we unfortunately show that computing the most likely tokenization is computationally hard.
We formalize this as follows.

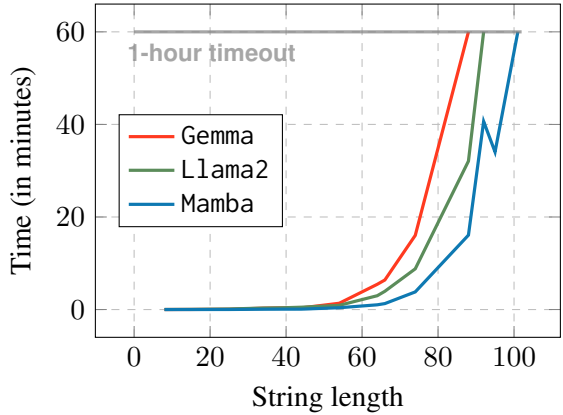
\begin{figure}[t]
    \begin{tikzpicture}
    \begin{axis}[
        ylabel={Time (in minutes)},
        xlabel={String length},
        ytick distance=20,
        width=\columnwidth,
        height=6.5cm,
        ymajorgrids=true, yminorgrids=true,
        grid=both,
        grid style=dashed,
        axis on top=false,
        legend style={
            at={(0.05, 0.50)},
            anchor=west,
        },
        legend cell align={left},
        ]
        \addplot[palette-orange,very thick] table[x=len,y=Gemma2B] {tikz/data/branch_bound_time.txt};
        \addplot[palette-green,very thick] table[x=len,y=Llama2] {tikz/data/branch_bound_time.txt};
        \addplot[palette-blue,very thick] table[x=len,y=Mamba] {tikz/data/branch_bound_time.txt};
        \legend{\gemma{},\llama{},\mamba{}}
        \addplot[gray,very thick,opacity=0.7,domain=0:102] {60} node[pos=0.2,below] {\sf\footnotesize\textbf{1-hour timeout}};
    \end{axis}
\end{tikzpicture}
    \caption{\textbf{Run-time for branch-and-bound over tokenizations.} The search grows exponentially with the number of characters in the string.
    }\label{fig:branch-and-bound}
\end{figure}

\begin{restatable}[Most-Likely Tokenization]{problem}{prbMPE}\label{prob: mpe}
    Let $\tokens$ denote a token sequence. Given a string $\sentence$, an autoregressive LLM $\llmdist$, and a threshold $\epsilon > 0$, the most likely tokenization problem is deciding whether
    \begin{equation*}
        \max_{\tokens} \llmdist(\tokens,\sentence) > \epsilon.
    \end{equation*}
\end{restatable}

\begin{restatable}[]{theorem}{thmMPE} \label{thm:hardmpe}
    The most-likely tokenization problem is NP-complete.
\end{restatable}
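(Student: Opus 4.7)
The plan is a standard two-part NP-completeness argument: membership in NP by direct verification, and NP-hardness via a polynomial-time reduction from 3-SAT.

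For membership in NP, a token sequence $\tokens$ with $\tokens \models_\vocab \sentence$ has length at most $|\sentence|$, so it serves as a polynomial-size certificate. Given $\tokens$, one verifies $\tokens \models_\vocab \sentence$ by concatenation in linear time and computes $\llmdist(\tokens, \sentence) = \prod_i \llmdist(\token_i \mid \token_1, \dots, \token_{i-1})$ in polynomial time via autoregressive evaluation, accepting iff the product exceeds $\epsilon$.

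For NP-hardness, I would reduce from 3-SAT. Given a CNF formula $\cnf = \clause_1 \wedge \cdots \wedge \clause_m$ over variables $\variable_1, \dots, \variable_n$, introduce fresh characters $x_i, y_i$ for each variable and $z_j$ for each clause. Let $\vocab$ consist of the singletons $x_i, y_i, z_j$ together with the merged tokens $x_i y_i$, and set
\begin{equation*}
    \sentence \defeq x_1 y_1 \, x_2 y_2 \cdots x_n y_n \, z_1 z_2 \cdots z_m.
\end{equation*}
Each variable gadget $x_i y_i$ admits exactly two tokenizations, $[x_i y_i]$ (encoding $\variable_i = \textnormal{true}$) and $[x_i, y_i]$ (encoding $\variable_i = \textnormal{false}$), while the clause segment tokenizes uniquely, so the tokenizations of $\sentence$ are in bijection with the truth assignments over $\variable_1, \dots, \variable_n$. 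I define $\llmdist$ so that each variable gadget contributes a factor of exactly $1/2$ to the tokenization probability regardless of which of its two tokenizations is chosen, and the conditional probability of $z_j$ given the preceding tokens equals $1$ if the encoded assignment satisfies $\clause_j$ and $0$ otherwise; any residual mass (including on prefixes that deviate from the intended gadget structure) is routed to a distinguished dead-end token not occurring in $\sentence$. Clause satisfaction is polynomial-time computable from the token history, so $\llmdist$ is realized by a polynomial-size description. Under this construction,
\begin{equation*}
    \max_{\tokens} \llmdist(\tokens, \sentence) =
    \begin{cases}
        2^{-n} & \text{if } \cnf \text{ is satisfiable,} \\
        0 & \text{otherwise,}
    \end{cases}
\end{equation*}
so taking $\epsilon \defeq 2^{-n-1}$ makes the decision problem equivalent to 3-SAT.

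The main obstacle I anticipate is ensuring that $\llmdist$ is a well-formed autoregressive distribution: its conditionals must sum to one at every prefix over $\vocab^*$, not only at prefixes consistent with the intended structure. The dead-end token absorbs the residual mass and, since it cannot appear in any $\tokens \models_\vocab \sentence$, never contributes to $\max_{\tokens} \llmdist(\tokens, \sentence)$. A secondary point is what precisely is meant by \emph{autoregressive LLM}; my reduction treats it as any polynomial-time computable conditional distribution, which is what membership in NP also presupposes. If a narrower architectural class were intended, one would additionally embed the clause-checking logic in a circuit of that form, a routine but more laborious step.
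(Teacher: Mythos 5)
Your high-level strategy is the same as the paper's (NP membership by certificate verification, NP-hardness by reduction from 3-SAT with variable gadgets realized as binary tokenization choices and clause gadgets realized as conditional probabilities on a trailing segment). However, there are two technical gaps in the construction of $\llmdist$ that the paper's proof is specifically designed to avoid.

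First, you set the conditional probability of each clause token $z_j$ to exactly $1$ or $0$ depending on clause satisfaction. The paper's Assumption on expressivity explicitly restricts conditionals to the open interval $(0,1)$, with the remark that values $0$ or $1$ ``cannot be expressed using logits.'' You flag a related concern as a ``secondary point'' about architectural class, but the issue is narrower and more concrete than circuit structure: within the paper's own model class, $0$ and $1$ are simply unavailable. The paper sidesteps this by assigning high-but-not-one probability (e.g.\ $0.9$) to the good continuation and low-but-not-zero probability to the bad one, and then choosing the threshold $\epsilon = 0.5\,(0.45)^n (0.9)^{n+K}$ with enough slack that only the all-$0.9$ case clears it.

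Second, the requirement that each variable gadget contribute a factor of \emph{exactly} $1/2$ along both branches is arithmetically infeasible without a conditional equal to $1$. Your two branches have different lengths: the merged branch $[x_i y_i]$ is a single step contributing $\llmdist(x_i y_i \mid \cdot) = 1/2$, while the split branch $[x_i, y_i]$ is two steps contributing $\llmdist(x_i \mid \cdot)\,\llmdist(y_i \mid \cdot, x_i)$. At the branching step, $\llmdist(x_i \mid \cdot) + \llmdist(x_i y_i \mid \cdot) \le 1$ forces $\llmdist(x_i \mid \cdot) \le 1/2$, and then $\llmdist(y_i \mid \cdot, x_i) = (1/2)/\llmdist(x_i \mid \cdot) \ge 1$, so equality to $1/2$ on both branches requires a conditional of exactly $1$, again violating the assumption (and leaving no residual mass for the dead-end token you invoke). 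The paper's gadget avoids this symmetry constraint: it uses $0.45$ at the branch point for both $\mquote{a}$ and $\mquote{ab}$, then $0.9$ on the forced continuation, so both branches yield $0.45 \cdot 0.9$ with the remaining mass distributed over off-path tokens, keeping every conditional strictly inside $(0,1)$. With asymmetric-but-equal products and a threshold chosen with a multiplicative safety margin, the equivalence with satisfiability goes through cleanly.

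Both gaps are repairable along the paper's lines (replace $1$ and $0$ by $1-\delta$ and $\delta$, replace the exact-$1/2$ gadget by a two-step gadget with a forced $0.9$ continuation, and set the threshold with slack), but as written the construction does not produce a valid member of the model class.
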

\begin{proof}(Sketch)
    The proof is by reduction from the 3-SAT Boolean satisfiability problem~\citep{karp2010reducibility}. We encode the Boolean variables as possible tokenizations of substrings such that there is a correspondence between the probability of the most likely tokenization and the existence of a satisfiying assignment. The full proof is in Appendix~\ref{apxsec:hardness}.
\end{proof}

\begin{figure}[t]
    \begin{tikzpicture}
    \pgfplotstableread[col sep=semicolon,trim cells]{
t ; logit ; prob
0 ; -2.7970573902130127 ; 0.06098926439881325
1 ; -8.397248268127441 ; 0.00022548694687429816
2 ; -11.685832023620605 ; 8.412162060267292e-06
3 ; -15.168355941772461 ; 2.5850368956525926e-07
4 ; -17.357582092285156 ; 2.8953285990951372e-08
5 ; -18.490615844726562 ; 9.324543803757024e-09
6 ; -24.11458969116211 ; 3.366408193872239e-11
7 ; -24.437774658203125 ; 2.436740566269524e-11
8 ; -25.024127960205078 ; 1.3556865699482223e-11
9 ; -25.91707992553711 ; 5.550795066644465e-12
10 ; -27.194114685058594 ; 1.547911213269082e-12
11 ; -27.211828231811523 ; 1.5207337342518223e-12
12 ; -27.27505874633789 ; 1.4275538216812489e-12
13 ; -27.46468734741211 ; 1.1809672163798357e-12
14 ; -30.039432525634766 ; 8.995808384228837e-14
15 ; -33.21503829956055 ; 3.757440695118918e-15
16 ; -35.72041320800781 ; 3.0677619584485556e-16
17 ; -37.20742416381836 ; 6.934593226466443e-17
18 ; -37.34132766723633 ; 6.065511614548419e-17
19 ; -38.132057189941406 ; 2.7507932629481563e-17
20 ; -38.230003356933594 ; 2.494137978590918e-17
21 ; -38.52330780029297 ; 1.860115950469843e-17
22 ; -39.226200103759766 ; 9.210384805537046e-18
23 ; -40.35053253173828 ; 2.992170769545724e-18
24 ; -40.64323806762695 ; 2.2328830737195174e-18
25 ; -42.74208450317383 ; 2.737465056834454e-19
26 ; -42.746192932128906 ; 2.726240991403934e-19
27 ; -43.409912109375 ; 1.4038305772896763e-19
28 ; -43.66197204589844 ; 1.091054588378682e-19
29 ; -43.90371322631836 ; 8.5676078512838e-20
30 ; -46.92692947387695 ; 4.1676715642692196e-21
31 ; -47.870452880859375 ; 1.6222824298980709e-21
32 ; -49.27819061279297 ; 3.96966419539738e-22
33 ; -50.832210540771484 ; 8.391729391430699e-23
34 ; -51.903564453125 ; 2.8745400344100697e-23
35 ; -53.01677703857422 ; 9.442919048706992e-24
36 ; -55.15247344970703 ; 1.1157968132112605e-24
37 ; -55.89659881591797 ; 5.3017048812898405e-25
38 ; -57.213157653808594 ; 1.4211547376900628e-25
39 ; -57.64808654785156 ; 9.199286038795274e-26
40 ; -58.048545837402344 ; 6.163634027856949e-26
41 ; -58.17741012573242 ; 5.41840884034872e-26
42 ; -58.79214096069336 ; 2.930202262028071e-26
43 ; -63.743560791015625 ; 2.0726371764184474e-28
44 ; -65.33651733398438 ; 4.21416041323463e-29
45 ; -66.45292663574219 ; 1.3799415945319914e-29
46 ; -68.26319122314453 ; 2.257733819567853e-30
47 ; -75.02571105957031 ; 2.610644199810687e-33
48 ; -76.16426086425781 ; 8.36145386622713e-34
49 ; -76.552490234375 ; 5.671211984805474e-34
50 ; -81.31753540039062 ; 4.833292835752499e-36
51 ; -92.64337158203125 ; 5.827720053434049e-41
    }\data
    \begin{axis}[
        area style,
        const plot,
        ymode=log,
        width=\columnwidth,
        height=6.5cm,
        ymax=1,ymin=1e-40,
        xmin=0,xmax=51,
        xlabel={Tokenizations},
        ylabel={Probability},
        every y tick label/.append style={font=\scriptsize,yshift=-0.5em,},
        xticklabels={},
        xmajorgrids=true,
        ymajorgrids=true,
        xminorgrids=true,
        yminorgrids=true,
        grid=both,
        grid style=dashed,
        enlargelimits=false,
        axis on top=false,
        legend style={
            at={(0.95, 0.95)},
            anchor=north east,
        },
        legend cell align={left},
    ]
    \addplot[forget plot,draw=none,mark=none,name path=zero] {1e-40};
    \addplot[forget plot,name path=pr] table [x=t,y=prob,palette-blue] from \data;
    \addplot[palette-blue!20!white] fill between[of=pr and zero];

    \addplot[forget plot,name path=can,draw=none,mark=none] table {
        x y
        0 0.0609892643988132
        1 1e-40
    };
    \addplot[palette-green!40!white,area legend] fill between[of=can and zero];

    \addplot[forget plot,name path=med,draw=none,mark=none] table {
        x y
        0 1e-40
        25 2.737465056834454e-19
        26 1e-40
    };
    \addplot[forget plot,pattern color=palette-orange!70!white,pattern={Lines[angle=-45, distance=0.75mm, line width=0.4mm]}] fill between[of=med and zero];

    \addplot[forget plot,name path=low,draw=none,mark=none] table {
        x y
        0 1e-40
        50 4.833292835752499e-36
        51 1e-40
    };
    \addplot[forget plot,pattern color=palette-purple!70!white,pattern={Lines[angle=-45, distance=0.75mm, line width=0.4mm]}] fill between[of=low and zero];
    \addplot[forget plot,draw=none,mark=none,domain=1:51,name path=poszero] {1e-40};
    \addplot[forget plot,name path=pr,area legend,draw=none,mark=none,name path=pospr,x filter/.expression={(x > 0 ? \pgfmathresult : NaN)}] table [x=t,y=prob,palette-blue] from \data;

    \legend{{\scriptsize$P(\neg\textup{Canonical}|\texttt{``Tokens''})\approx 0.004$},{\scriptsize$P(\textup{Canonical}|\texttt{``Tokens''})\phantom{\neg}\approx 0.996$}}

    \node[palette-green] (canon) at (11, 1e-20) {\small\texttt{[\implws Tok,ens]}};
    \draw[->,>=latex',thick,palette-green] (1.5, 1e-30) edge[bend right=25] (canon.south);
    \node[palette-purple,anchor=east] (char) at (50, 1e-15) {\small\texttt{[\implws,T,o,k,e,n,s]}};
    \draw[->,>=latex',thick,palette-purple] (50.5, 1e-35) edge[bend right=25] (char);
    \node[palette-orange,anchor=north] (mid) at (35, 1e-28) {\small\texttt{[\implws,To,ken,s]}};
    \draw[->,>=latex',thick,palette-orange] (26.5, 1e-38) edge[bend right=20] (mid.south);
    \end{axis}
\end{tikzpicture}
    \vspace{-0.14cm} %
    \caption{\textbf{Distribution of tokenizations for the word \texttt{Tokens}.} An overwhelming probability mass is on the canonical tokenization, with (an exponential number of) others sharing a miniscule portion of probability.}%
    \label{fig:mar-vs-can}
\end{figure}
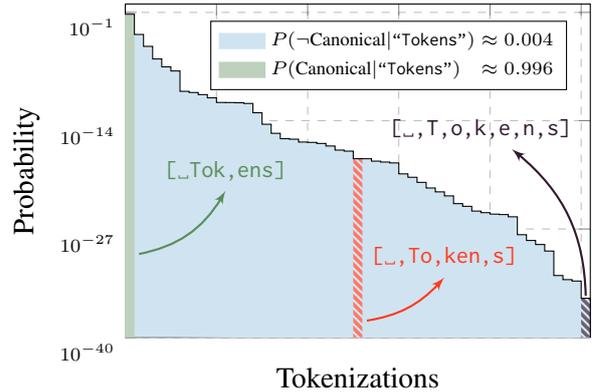

Given the LLM training regime, a reasonable assumption in practice is that the canonical tokenization is (close to) the most likely tokenization.
To empirically verify this claim, we devise a branch-and-bound algorithm to search through the MDD in order to find some tokenization whose probability is higher than the canonical.
We do so by setting the initial lower bound as the canonical probability, and then pruning paths whose partial probability is below this bound.
We set a time budget of 1-hour, after which the search returns the best tokenization at that point.
As expected, we find that branch-and-bound is quickly overwhelmed by the number of tokenizations as the string length grows. Finding the most likely tokenization this way rapidly becomes intractable for longer strings.

\Cref{fig:branch-and-bound} shows the branch-and-bound search time across three LLM architectures for the string ``Language models typically tokenize text into subwords, utilizing a learned deterministic set of merge rules to aggregate tokens.'' 
We gradually insert new words and re-run search to visualize its scalability.
Branch-and-bound always returns the canonical tokenization as the best candidate, despite the exponential number of possible candidates.

Not only does the canonical tokenization seem to be the most likely one for shorter text, but it also often is overwhelmingly so.
\Cref{fig:mar-vs-can} shows the tokenization  distribution for the word \texttt{Tokens} under the \llama{} model; there are 52 tokenizations, with the canonical taking most of the mass.

Even though canonical seems to take up a majority of the probability mass in these cases, if we look at generated text from these LLMs, a sizable percentage of the (unconditionally) generated tokenizations are non-canonical.
\Cref{fig:canon-perc} shows canonicity as a function of the number of tokens generated by the language model.
As generated text grows larger, the probability of generating non-canonical tokenizations also grows.
This is surprising, as it is seemingly in contradiction to earlier evidence.
It turns out that, if we investigate these non-canonical generated sequences in more depth, we find that a large majority of such cases are non-English, with a large portion consisting of code and languages that utilize unicode characters.
It is nevertheless interesting that some non-canonical tokenizations are indeed more likely than their canonical counterparts, with some even in grammatically correct English.
For example, the string $\sentence=\texttt{\implws tongueless}$ (with \texttt{\implws} denoting a whitespace) is canonically tokenized as $\tokens^\ast=\texttt{[\implws tong,uel,ess]}$ by $\gemma$, with $\llmdist(\tokens^\ast|\sentence)\approx 0.474$; however $\tokens=\texttt{[\implws tongue,less]}$ is a more likely tokenization according to the LLM, with $\llmdist(\tokens|\sentence)\approx 0.518$.
This gap between the probability of the most likely tokenization and canonical tokenization can be even more extreme. For instance, $\tokens=\texttt{[Hyp,no,patu,rist]}$ is a much more likely tokenization $\llmdist(\tokens|\sentence)=0.9948$ compared to the canonical tokenization $\tokens^\ast=\texttt{[Hyp,nop,atu,rist]}$, with the latter taking only $\llmdist(\tokens^\ast|\sentence)=0.0004$ of the total mass.

\begin{figure}[t]
    \centering
    \begin{tikzpicture}
    \begin{axis}[
        ylabel={Percentage of canonicity},
        xlabel={Number of tokens},
        width=\columnwidth,
        ymajorgrids=true, yminorgrids=true,
        xtick distance=32,
        grid=both,
        grid style=dashed,
        axis on top=false,
        ymin=55, ymax=105,
        legend style={
            at={(0.05, 0.05)},
            anchor=south west,
        },
        legend cell align={left},
        ]
        \addplot[palette-orange,very thick] table[x=i,y=Gemma2B] {tikz/data/canon_perc_len_data.txt};
        \addplot[palette-green,very thick] table[x=i,y=Llama2]  {tikz/data/canon_perc_len_data.txt};
        \addplot[palette-blue,very thick] table[x=i,y=Mamba]   {tikz/data/canon_perc_len_data.txt};
        \legend{\gemma{},\llama{},\mamba{}}
    \end{axis}
\end{tikzpicture}
    \caption{\textbf{Canonicity in generated text.} Percentage of canonicity drops as more tokens are generated.}
    \label{fig:canon-perc}
\end{figure}
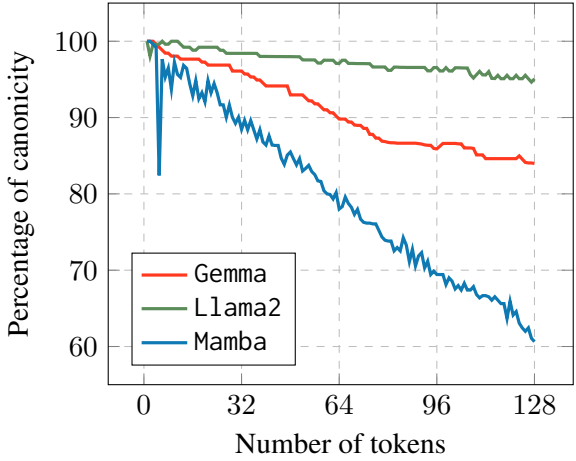

This seems to suggest that there is some mass being attributed to non-canonical tokenizations, especially over longer text.
We thus raise another question: instead of using a single tokenization, could we aggregate over all tokenizations, each weighted by their probability, effectively computing the marginal probability of a given string?

\section{... and Computing the Marginal Probability is Also Hard}

Evaluating the probability of a string requires marginalizing over all its possible tokenizations.
We now formally define this task and show it to be computationally hard.
\begin{restatable}[Marginal String Probability]{problem}{prbMarg}\label{prob: par}
    Let $\tokens$ denote a token sequence. Given a string $\sentence$ and an autoregressive LLM $\llmdist$, the marginal string probability problem is to compute 
    \begin{equation*}
        \llmdist(\sentence)=\sum_{\tokens}\llmdist\left(\tokens,\sentence\right).
    \end{equation*}
    
\end{restatable}
\begin{restatable}{theorem}{thmMarg} \label{thm:tothard}
    The marginal string probability problem is \#P-hard.
\end{restatable}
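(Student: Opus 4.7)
(Proposal)
The plan is to reduce from \#3-SAT (counting the satisfying assignments of a 3-CNF formula), which is the prototypical \#P-complete problem~\citep{karp2010reducibility}. The natural strategy is to reuse, and slightly strengthen, the gadget already built for \Cref{thm:hardmpe}: that construction maps Boolean assignments to tokenizations of a specially-crafted string $\sentence_\cnf$, and an autoregressive LLM $\llmdist$ is defined so that the probability of a tokenization $\tokens$ of $\sentence_\cnf$ encodes whether the corresponding assignment satisfies the CNF $\cnf$. To lift this from a decision reduction to a counting reduction, I would design $\llmdist$ so that \emph{every} tokenization of $\sentence_\cnf$ corresponds to exactly one Boolean assignment, and conversely.

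Concretely, I would arrange the vocabulary and next-token conditionals such that, for any tokenization $\tokens \models \sentence_\cnf$ encoding an assignment $\alpha$, one has
\begin{equation*}
   \llmdist(\tokens, \sentence_\cnf) \;=\; \beta \cdot \mathds{1}[\alpha \models \cnf],
\end{equation*}
for a fixed, efficiently computable constant $\beta > 0$ depending only on the formula size (e.g. a uniform per-literal choice probability times an indicator factor enforcing clause satisfaction, accumulated autoregressively over the gadget). Marginalizing gives
\begin{equation*}
   \llmdist(\sentence_\cnf) \;=\; \sum_{\tokens \models \sentence_\cnf} \llmdist(\tokens, \sentence_\cnf) \;=\; \beta \cdot \#\mathrm{SAT}(\cnf),
\end{equation*}
so an oracle for \Cref{prob: par} yields $\#\mathrm{SAT}(\cnf)$ after a single division, giving the \#P-hardness claim. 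The whole construction must run in polynomial time in $|\cnf|$, and $\llmdist$ must be representable as a valid autoregressive distribution (i.e. the conditionals sum to one at every prefix), which is already the case in the NP-hardness gadget but needs to be re-checked in the strengthened form.

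The main obstacle will be the \emph{bijection} between tokenizations of $\sentence_\cnf$ and assignments $\alpha \in \{0,1\}^n$. In the NP-hardness reduction it suffices that \emph{at least one} tokenization realizes a satisfying assignment; here, spurious tokenizations that do not correspond to any assignment, or multiple tokenizations that encode the same assignment, would pollute the count. I expect to handle this by inserting unique delimiter substrings between variable/clause gadgets whose only tokenization forces an alignment, and by using per-variable gadgets whose tokenizations are in one-to-one correspondence with $\{0,1\}$. If a clean bijection is hard to enforce directly, a fallback is to make the encoding sound up to a known multiplicative over-count $k(\cnf)$ computable in polynomial time, and recover $\#\mathrm{SAT}(\cnf)$ as $\llmdist(\sentence_\cnf)/(\beta \cdot k(\cnf))$; either way, the reduction is polynomial and yields \#P-hardness.
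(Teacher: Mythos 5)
Your high-level plan is the same as the paper's: reduce from \#3-SAT, reuse the string/vocabulary gadget from \Cref{thm:hardmpe} so that the (exactly $2^n$) tokenizations of $\sentence_\cnf$ are in bijection with Boolean assignments, then read off $\#\mathrm{SAT}(\cnf)$ from the marginal. The bijection worry you raise is not actually a problem in the paper's gadget: the string $\mquote{abcabc\dots ddd\dots}$ with vocabulary $\{\mquote{a},\mquote{bc},\mquote{ab},\mquote{c},\mquote{d}\}$ forces each $\mquote{abc}$ block to split as either $(\mquote{a},\mquote{bc})$ or $(\mquote{ab},\mquote{c})$, giving exactly one bit per variable and a uniquely-tokenized trailing $\mquote{d}$ block, so no delimiters or over-count correction are needed.

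The genuine gap is your key step $\llmdist(\tokens,\sentence_\cnf)=\beta\cdot\mathds{1}[\alpha\models\cnf]$. Getting a literal $0$ on the unsatisfying tokenizations requires some conditional $\llmdist(\token_i\mid\tokens_{1:i-1})$ to be exactly $0$, which \Cref{ass: expressivity} explicitly disallows: the $\delta_i$ must lie in the open interval $(0,1)$ because autoregressive LLMs produce probabilities via softmax over logits. So the marginal is never $\beta\cdot\#\mathrm{SAT}(\cnf)$ exactly, and your ``single division'' recovery step fails. The paper instead makes the per-clause conditional $1-0.5^{n+K+1}$ when the clause is satisfied and $0.5^{n+K+1}$ otherwise, then proves the two-sided sandwich
\[
(C-0.5)\,(0.45)^n(0.9)^n \;<\; \sum_{\tokens}\llmdist(\tokens,\sentence_\cnf) \;<\; (C+0.5)\,(0.45)^n(0.9)^n,
\]
using the fact that there are at most $2^n$ tokenizations total, so the ``noise'' from unsatisfying tokenizations (each contributing at most $\tfrac{(0.5)^{n+K+1}}{4}(0.45)^n(0.9)^n$) sums to less than $0.5\,(0.45)^n(0.9)^n$. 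This lets one recover the integer $C$ exactly from the oracle value by rounding (or binary search). Your proposal needs that precision/perturbation argument; as written, the reduction is not realizable in the paper's model of an LLM.
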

\begin{proof}
    (Sketch) The proof is by reduction from the counting version of the 3-SAT Boolean satisfiability problem (\#3-SAT), which is known to be \#P-complete. We encode the Boolean variables as possible tokens in a string, such that there is a correspondence between the number of satisfying assignments of the Boolean formula and the marginal probability of the string under the LLM. The full proof can be found in Appendix \ref{apxsec:hardness}.
\end{proof}

\subsection*{Marginal Probability Estimation}

\begin{figure*}[t]
    \centering
    \begin{subfigure}[t]{0.25\textwidth}
    \centering
    \begin{tikzpicture}[baseline]
    \begin{axis}[
        ylabel={Log-probability},
        xlabel={Number of samples},
        xmode=log,
        log basis x={2},        
        xtickten={0,6,11},
        width=\textwidth,
        ymajorgrids=true, yminorgrids=true,
        grid=both,
        grid style=dashed,
        axis on top=false,
        legend style={
            at={(0.95, 0.95)},
            anchor=north east,
            font={\footnotesize},
        },
        legend cell align={left},
        ]
        \addplot[palette-orange,ultra thick,dashed] table[x=i,y=lmar] {tikz/data/mar_curve_data.txt};\label{pgf:true-marginal}
        \addplot[palette-green,very thick] table[x=i,y=lcan] {tikz/data/mar_curve_data.txt};\label{pgf:canonical}
        \addplot[palette-blue,very thick] table[x=i,y=lmu] {tikz/data/mar_curve_data.txt};\label{pgf:marginal}
    \end{axis}
\end{tikzpicture}
    \caption{String probability estimates}\label{fig:mar-curve}
    \end{subfigure}~\begin{subfigure}[t]{0.75\textwidth}
    \centering
    \begin{tikzpicture}[baseline]
    \begin{axis}[
        xmode=log,
        log basis x={2},
        xtickten={0,6,11},
        width=0.33\textwidth,
        ytick distance=40,
        enlarge x limits,
        ymajorgrids=true, yminorgrids=true,
        grid=both,
        grid style=dashed,
        axis on top=false,
        legend style={
            at={(0.95, 0.95)},
            anchor=north east,
            font={\footnotesize},
        },
        legend cell align={left},
        ]
        \addplot[gray,very thick,opacity=0.4] table[x=i,y=sopenbookqa_val_0] {tikz/data/Gemma2B_8_2048_data.txt};
        \addplot[gray,very thick,opacity=0.4] table[x=i,y=sopenbookqa_val_1] {tikz/data/Gemma2B_8_2048_data.txt};
        \addplot[gray,very thick,opacity=0.4] table[x=i,y=sopenbookqa_val_2] {tikz/data/Gemma2B_8_2048_data.txt};
        \addplot[gray,very thick,opacity=0.4] table[x=i,y=sopenbookqa_val_3] {tikz/data/Gemma2B_8_2048_data.txt};
        \addplot[gray,very thick,opacity=0.4] table[x=i,y=sopenbookqa_val_4] {tikz/data/Gemma2B_8_2048_data.txt};
        \addplot[gray,very thick,opacity=0.4] table[x=i,y=sopenbookqa_val_5] {tikz/data/Gemma2B_8_2048_data.txt};
        \addplot[gray,very thick,opacity=0.4] table[x=i,y=sopenbookqa_val_6] {tikz/data/Gemma2B_8_2048_data.txt};
        \addplot[gray,very thick,opacity=0.4] table[x=i,y=sopenbookqa_val_7] {tikz/data/Gemma2B_8_2048_data.txt};\label{pgf:marginal-indiv}
        \addplot[palette-orange,ultra thick] table[x=i,y=mu_sopenbookqa_val] {tikz/data/Gemma2B_8_2048_data.txt};
    \end{axis}
\end{tikzpicture}%
\begin{tikzpicture}[baseline]
    \begin{axis}[
        xlabel={Number of samples},
        xmode=log,
        log basis x={2},
        xtickten={0,6,11},
        width=0.33\textwidth,
        enlarge x limits,
        ymajorgrids=true, yminorgrids=true,
        grid=both,
        grid style=dashed,
        axis on top=false,
        legend style={
            at={(0.95, 0.95)},
            anchor=north east,
            font={\footnotesize},
        },
        legend cell align={left},
        ]
        \addplot[gray,very thick,opacity=0.4] table[x=i,y=sopenbookqa_val_0] {tikz/data/Llama2_8_2048_data.txt};
        \addplot[gray,very thick,opacity=0.4] table[x=i,y=sopenbookqa_val_1] {tikz/data/Llama2_8_2048_data.txt};
        \addplot[gray,very thick,opacity=0.4] table[x=i,y=sopenbookqa_val_2] {tikz/data/Llama2_8_2048_data.txt};
        \addplot[gray,very thick,opacity=0.4] table[x=i,y=sopenbookqa_val_3] {tikz/data/Llama2_8_2048_data.txt};
        \addplot[gray,very thick,opacity=0.4] table[x=i,y=sopenbookqa_val_4] {tikz/data/Llama2_8_2048_data.txt};
        \addplot[gray,very thick,opacity=0.4] table[x=i,y=sopenbookqa_val_5] {tikz/data/Llama2_8_2048_data.txt};
        \addplot[gray,very thick,opacity=0.4] table[x=i,y=sopenbookqa_val_6] {tikz/data/Llama2_8_2048_data.txt};
        \addplot[gray,very thick,opacity=0.4] table[x=i,y=sopenbookqa_val_7] {tikz/data/Llama2_8_2048_data.txt};
        \addplot[palette-green,ultra thick] table[x=i,y=mu_sopenbookqa_val] {tikz/data/Llama2_8_2048_data.txt};
    \end{axis}
\end{tikzpicture}%
\begin{tikzpicture}[baseline]
    \begin{axis}[
        xmode=log,
        log basis x={2},
        xtickten={0,6,11},
        width=0.33\textwidth,
        enlarge x limits,
        ymajorgrids=true, yminorgrids=true,
        grid=both,
        grid style=dashed,
        axis on top=false,
        legend style={
            at={(0.95, 0.95)},
            anchor=north east,
            font={\footnotesize},
        },
        legend cell align={left},
        ]
        \addplot[gray,very thick,opacity=0.4] table[x=i,y=sopenbookqa_val_0] {tikz/data/Mamba_8_4096_data.txt};
        \addplot[gray,very thick,opacity=0.4] table[x=i,y=sopenbookqa_val_1] {tikz/data/Mamba_8_4096_data.txt};
        \addplot[gray,very thick,opacity=0.4] table[x=i,y=sopenbookqa_val_2] {tikz/data/Mamba_8_4096_data.txt};
        \addplot[gray,very thick,opacity=0.4] table[x=i,y=sopenbookqa_val_3] {tikz/data/Mamba_8_4096_data.txt};
        \addplot[gray,very thick,opacity=0.4] table[x=i,y=sopenbookqa_val_4] {tikz/data/Mamba_8_4096_data.txt};
        \addplot[gray,very thick,opacity=0.4] table[x=i,y=sopenbookqa_val_5] {tikz/data/Mamba_8_4096_data.txt};
        \addplot[gray,very thick,opacity=0.4] table[x=i,y=sopenbookqa_val_6] {tikz/data/Mamba_8_4096_data.txt};
        \addplot[gray,very thick,opacity=0.4] table[x=i,y=sopenbookqa_val_7] {tikz/data/Mamba_8_4096_data.txt};
        \addplot[palette-blue,very thick] table[x=i,y=mu_sopenbookqa_val] {tikz/data/Mamba_8_4096_data.txt};
    \end{axis}
\end{tikzpicture}
    \caption{Log probability difference between approximate marginal and canonical probability}\label{fig:conv-dist}
    \end{subfigure}
    \caption{\textbf{Convergence of approximate marginal.} 
    (a)~the approximate marginal string probability as a function of the number of samples (\ref*{pgf:marginal}), compared to the canonical probability (\ref*{pgf:canonical}) and the true marginal (\ref*{pgf:true-marginal}). 
    (b)~average absolute difference in log-likelihood between the approximate marginal and the canonical probability for different strings across \textcolor{palette-orange}{\textbf{\gemma{}}}, \textcolor{palette-green}{\textbf{\llama{}}} and \textcolor{palette-blue}{\textbf{\mamba{}}} in color, with individual examples in gray (\ref*{pgf:marginal-indiv}).} \label{fig:convergence}
\end{figure*}
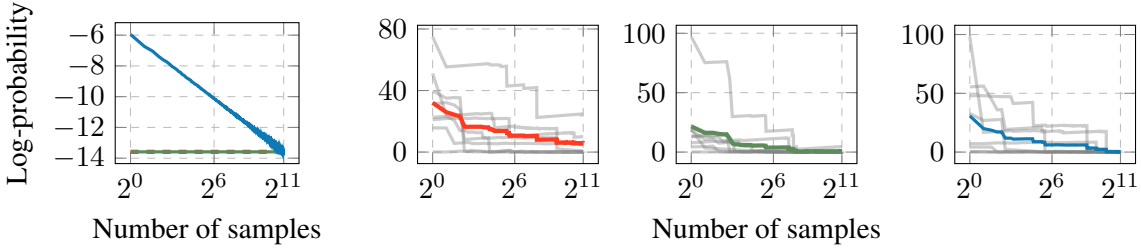

In light of the above hardness results, we now shift our attention to approximating the marginal string probability.
In particular, we will focus on estimators based on \emph{sequential importance sampling} \cite{kloek78,geweke89}.
In this instance of importance sampling, we sample tokenizations $\tokens$ given a string $\sentence$ according to some proposal distribution $\proposaldist(\tokens|\sentence)$. Given a set of samples $\tokens^{(1)}, \dots, \tokens^{(N)}$ from this distribution, an estimate of the marginal string probability $\llmdist(\sentence)$ is
\begin{align}\label{eq:mar-estimate}
    \llmdist(\sentence)\!=\!\mathbb{E}_{\tokens \sim q(\tokens|\sentence)} \left[ \frac{\llmdist(\sentence, \tokens)}{\proposaldist(\tokens|\sentence)} \right]\!\approx\!\frac{1}{N} \sum_{i=1}^{N} \frac{\llmdist(\sentence, \tokens^{(i)})}{\proposaldist(\tokens^{(i)}|\sentence)}.
\end{align}

A simple proposal distribution one might consider is the prior LLM token distribution:
\begin{equation*}
    \proposaldist_{\text{LLM}}(\tokens|\sentence) := \prod_{j=1}^{|\tokens|} \llmdist(\token_j|\tokens_{1:j-1}),
\end{equation*}
where $\llmdist(\token_j|\tokens_{1:j-1})$ is the LLM next-token distribution.
However, estimating $\llmdist(\sentence)$ this way requires rejecting all sampled token sequences where $\tokens\not\models\sentence$, making the approach infeasible in practice.

To address this issue, we use a modified proposal distribution: the \emph{1-step look-ahead 
proposal distribution}, first proposed in \citet{chirkova23}.
This distribution adjusts the LLM's next-token distribution at each step by checking whether the upcoming
token $v_j$, combined with the previous tokens, forms a tokenization of a prefix of the string~$\sentence$.
Intuitively, we iteratively prune away from the support of the distribution tokenizations not consistent
with $x$.
This can be done efficiently by simply traversing the MDD compiled from the string and masking out all tokens that are not compatible with the labels of the outgoing edges at the current node.
Formally, the proposal distribution is
\begin{align*}
    &\proposaldist_{\text{LA}}(\tokens|\sentence) \defeq\prod_{j=1}^{|\tokens|}\proposaldist_{\text{LA}}(\token_j|\tokens_{1:j-1},\sentence)\text{,~~ where}\\
    &\proposaldist_{\text{LA}}(\token_j|\tokens_{1:j-1},\sentence)\propto\llmdist(\token_j|\tokens_{1:j-1})\liv\tokens_{1:j}\models\sentence_{1:}\riv.
\end{align*}
Here, $\liv\tokens_{1:j}\models\sentence_{1:}\riv$ evaluates to $1$ if $\tokens_{1:j}$ forms a tokenization of a prefix of string $\sentence$, and $0$ otherwise.
Essentially, the proposal distribution is a \emph{greedy} and \emph{myopic} approximation of the LLM distribution over tokenizations at every step of the sequence. 

Interestingly, we observe that, for short strings where we are able to compute the true marginal, even though the proposal eventually converges to the true marginal as the number of samples increases, the probability of the canonical tokenization is just as close to the true marginal.
This seems to suggest that the canonical probability is, in fact, practically the marginal probability of the string in these cases.
\Cref{fig:mar-curve} shows one instance of the approximate marginal slowly converging to the true marginal as the number of samples increases for a single small example of the \textsc{OpenBookQA} dataset~\cite{openbookqa}.

\begin{figure*}[t]
    \centering
    \pgfplotstableread[col sep=semicolon,trim cells]{
    model   ; dataset    ; accuracy
    Llama2  ; HellaSwag  ; 0.596
    Gemma2B ; HellaSwag  ; 0.547
    Mamba   ; HellaSwag  ; 0.324
    Llama2  ; SocialIQA  ; 0.441
    Gemma2B ; SocialIQA  ; 0.487
    Mamba   ; SocialIQA  ; 0.391
    Llama2  ; OpenBookQA ; 0.308
    Gemma2B ; OpenBookQA ; 0.302
    Mamba   ; OpenBookQA ; 0.166
}\canonicaldata

\def\datasizes{{1000,1000,1000}}
\def\names{{"SocialIQA", "OpenBookQA"}}
\def\tasks{ssocial_iqa, sopenbookqa}
\def\xaxislabels{{"Number of samples","\phantom{Number of samples}"}}
\begin{tikzpicture}
    \pgfmathsetmacro{\size}{1000}
    \begin{axis}[
        title={\textsc{HellaSwag}},
        xtick distance={64},
        ytick distance={0.1},
        yticklabel={\pgfmathparse{\tick*100}\pgfmathprintnumber{\pgfmathresult}},
        width=0.33\textwidth,
        xlabel={\phantom{Number of samples}},
        ylabel={Accuracy (\%)},
        xmajorgrids=true,
        ymajorgrids=true,
        xminorgrids=true,
        yminorgrids=true,
        grid=both,
        grid style=dashed,
        axis on top=false,
    ]
    \def\datapath{tikz/data/shellaswag_Llama2_\size_256_MarLM_llm.csv}
    \addplot[very thick,palette-green,select coords between index={0}{255}] table
        [x=x,y=mean,name path=mu,col sep=comma] {\datapath};
    \addplot[name path=std_high,draw=none,select coords between index={0}{255}] table
        [x=x,y expr=\thisrow{mean}+\thisrow{stdev},col sep=comma] {\datapath};
    \addplot[name path=std_low,draw=none,select coords between index={0}{255}] table
        [x=x,y expr=\thisrow{mean}-\thisrow{stdev},col sep=comma] {\datapath};
    \addplot[fill=palette-green,opacity=0.30] fill between [of=std_high and std_low];
    \pgfplotstablegetelem{0}{accuracy}\of\canonicaldata
    \pgfmathsetmacro\tmp{\pgfplotsretval}
    \addplot[mark=none,very thick,dashed,palette-green,domain=0:255] {\tmp};

    \def\datapath{tikz/data/shellaswag_Gemma2B_\size_256_MarLM_llm.csv}
    \addplot[very thick,palette-orange,select coords between index={0}{255}] table
        [x=x,y=mean,name path=mu,col sep=comma] {\datapath};
    \addplot[name path=std_high,draw=none,select coords between index={0}{255}] table
        [x=x,y expr=\thisrow{mean}+\thisrow{stdev},col sep=comma] {\datapath};
    \addplot[name path=std_low,draw=none,select coords between index={0}{255}] table
        [x=x,y expr=\thisrow{mean}-\thisrow{stdev},col sep=comma] {\datapath};
    \addplot[fill=palette-orange,opacity=0.30] fill between [of=std_high and std_low];
    \pgfplotstablegetelem{1}{accuracy}\of\canonicaldata
    \pgfmathsetmacro\tmp{\pgfplotsretval}
    \addplot[mark=none,very thick,dashed,palette-orange,domain=0:255] {\tmp};

    \def\datapath{tikz/data/shellaswag_Mamba_\size_256_MarLM_llm.csv}
    \addplot[very thick,palette-blue,select coords between index={0}{255}] table
        [x=x,y=mean,name path=mu,col sep=comma] {\datapath};
    \addplot[name path=std_high,draw=none,select coords between index={0}{255}] table
        [x=x,y expr=\thisrow{mean}+\thisrow{stdev},col sep=comma] {\datapath};
    \addplot[name path=std_low,draw=none,select coords between index={0}{255}] table
        [x=x,y expr=\thisrow{mean}-\thisrow{stdev},col sep=comma] {\datapath};
    \addplot[fill=palette-blue,opacity=0.30] fill between [of=std_high and std_low];
    \pgfplotstablegetelem{2}{accuracy}\of\canonicaldata
    \pgfmathsetmacro\tmp{\pgfplotsretval}
    \addplot[mark=none,very thick,dashed,palette-blue,domain=0:255] {\tmp};
    \end{axis}
\end{tikzpicture}
\foreach \d [count=\j] in \tasks {
    \begin{tikzpicture}
        \pgfmathsetmacro{\size}{\datasizes[\j-1]}
        \begin{axis}[
            title={\textsc{\pgfmathparse{\names[\j-1]}\pgfmathresult}},
            xtick distance={64},
            ytick distance={0.05},
            width=0.33\textwidth,
            yticklabel={\pgfmathparse{\tick*100}\pgfmathprintnumber{\pgfmathresult}},
            xlabel={\pgfmathparse{\xaxislabels[\j-1]}\pgfmathresult},
            xmajorgrids=true,
            ymajorgrids=true,
            xminorgrids=true,
            yminorgrids=true,
            grid=both,
            grid style=dashed,
            axis on top=false,
        ]
        \def\datapath{tikz/data/\d_Llama2_\size_256_MarLM_llm.csv}
        \addplot[very thick,palette-green,select coords between index={0}{255}] table
            [x=x,y=mean,name path=mu,col sep=comma] {\datapath};
        \addplot[name path=std_high,draw=none,select coords between index={0}{255}] table
            [x=x,y expr=\thisrow{mean}+\thisrow{stdev},col sep=comma] {\datapath};
        \addplot[name path=std_low,draw=none,select coords between index={0}{255}] table
            [x=x,y expr=\thisrow{mean}-\thisrow{stdev},col sep=comma] {\datapath};
        \addplot[fill=palette-green,opacity=0.30] fill between [of=std_high and std_low];
        \pgfmathtruncatemacro{\lpos}{(\j)*3}
        \pgfplotstablegetelem{\lpos}{accuracy}\of\canonicaldata
        \pgfmathsetmacro\tmp{\pgfplotsretval}
        \addplot[mark=none,very thick,dashed,palette-green,domain=0:255] {\tmp};

        \def\datapath{tikz/data/\d_Gemma2B_\size_256_MarLM_llm.csv}
        \addplot[very thick,palette-orange,select coords between index={0}{255}] table
            [x=x,y=mean,name path=mu,col sep=comma] {\datapath};
        \addplot[name path=std_high,draw=none,select coords between index={0}{255}] table
            [x=x,y expr=\thisrow{mean}+\thisrow{stdev},col sep=comma] {\datapath};
        \addplot[name path=std_low,draw=none,select coords between index={0}{255}] table
            [x=x,y expr=\thisrow{mean}-\thisrow{stdev},col sep=comma] {\datapath};
        \addplot[fill=palette-orange,opacity=0.30] fill between [of=std_high and std_low];
        \pgfmathtruncatemacro{\lpos}{1+(\j)*3}
        \pgfplotstablegetelem{\lpos}{accuracy}\of\canonicaldata
        \pgfmathsetmacro\tmp{\pgfplotsretval}
        \addplot[mark=none,very thick,dashed,palette-orange,domain=0:255] {\tmp};

        \def\datapath{tikz/data/\d_Mamba_\size_256_MarLM_llm.csv}
        \addplot[very thick,palette-blue,select coords between index={0}{255}] table
            [x=x,y=mean,name path=mu,col sep=comma] {\datapath};
        \addplot[name path=std_high,draw=none,select coords between index={0}{255}] table
            [x=x,y expr=\thisrow{mean}+\thisrow{stdev},col sep=comma] {\datapath};
        \addplot[name path=std_low,draw=none,select coords between index={0}{255}] table
            [x=x,y expr=\thisrow{mean}-\thisrow{stdev},col sep=comma] {\datapath};
        \addplot[fill=palette-blue,opacity=0.30] fill between [of=std_high and std_low];
        \pgfmathtruncatemacro{\lpos}{2+(\j)*3}
        \pgfplotstablegetelem{\lpos}{accuracy}\of\canonicaldata
        \pgfmathsetmacro\tmp{\pgfplotsretval}
        \addplot[mark=none,very thick,dashed,palette-blue,domain=0:255] {\tmp};
        \end{axis}
    \end{tikzpicture}
}
    \caption{\textbf{Accuracy of approximated marginal string probability over number of samples.} Solid curves (\protect\tikz[baseline=-0.5ex]{\protect\draw[gray,very thick] (0,0) -- (0.5,0);}) show marginal mean accuracy, shaded areas (\protect\tikz[baseline=0.25ex]{\protect\draw[gray!70!white,fill=gray!40!white,fill opacity=0.3] (0,0) rectangle ++(0.5,0.25);}) show marginal standard deviation, dashed lines (\protect\tikz[baseline=-0.5ex]{\protect\draw[gray,very thick,dashed] (0,0) -- (0.5,0);}) show canonical baseline across \textcolor{palette-orange}{\textbf{\gemma{}}}, \textcolor{palette-green}{\textbf{\llama{}}} and \textcolor{palette-blue}{\textbf{\mamba{}}}.}\label{fig:acc_curve}
\end{figure*}

\begin{table*}[t]
    \centering
    \begin{tabular}{l|ccc|ccc|ccc|c}
        \hline
        \hline
                 & \multicolumn{3}{c|}{\textsc{HellaSwag}} & \multicolumn{3}{c|}{\textsc{SocialIQA}} & \multicolumn{3}{c|}{\textsc{OpenBookQA}} & \\
                 & \textsc{Can} & \textsc{Mar} & \textsc{Diff} & \textsc{Can} & \textsc{Mar} & \textsc{Diff} & \textsc{Can} & \textsc{Mar} & \textsc{Diff} & \textsc{Avg Diff}\\
        \hline
        \llama{} & 59.6 & \textbf{60.4} & 0.81 & 44.1 & \textbf{45.4} & 1.33 & 30.8 & \textbf{33.1} & 2.33 & 1.49 \\
        \gemma{} & 54.7 & \textbf{55.8} & 1.10 & 48.7 & \textbf{48.9} & 0.21 & 30.2 & \textbf{31.0} & 0.76 & 0.69 \\
        \mamba{} & \textbf{32.4} & 31.6 & -0.80 & 39.1 & \textbf{40.9} & 1.80 & 16.6 & \textbf{22.3} & 5.69 & 2.23 \\
        \hline
        \hline
    \end{tabular}
    \caption{\textbf{Accuracy after tuning the number of samples to estimate marginal string probability.} \textsc{Can} stands for the canonical baseline accuracy, \textsc{Mar} for the tuned approximate marginal, and \textsc{Diff} the difference between the last two. \textbf{Bold} entries indicate highest accuracy. The last column shows the average difference across the three datasets.}\label{tab:tuned-acc}
\end{table*}

For longer text, we observe that, for most cases, the approximate marginal also converges close to the canonical probability.
As the number of tokenizations to be summed out is enormous, we are unable to compute the true marginal.
In none of the cases we evaluated, the approximate marginal probability was meaningfully higher than the canonical.
\Cref{fig:conv-dist} shows the difference in log-probability of several marginal estimates across different architectures and \textsc{OpenBookQA} strings.
Notably, estimates that were very different from the canonical probability contained no canonical samples, further confirming that most of the probability mass is in the canonical tokenization.

So far, we have presented empirical evidence that seems to confirm that: (1) canonical is, in most cases, the most likely tokenization, and (2) it carries so much of the probability mass that it is practically the marginal itself.
Curiously, in an arguably contradictory twist, we experimentally show evidence that suggests that there exists some signal in non-canonical tokenizations to the point where we are able to achieve consistently better downstream performance in Q\&A tasks.

\section{Non-Canonical Tokenizations in Question Answering}

In multiple-choice question answering, a model is given a question (possibly with context) and is asked to choose between a number of different answers to the question. Typically, this is performed by evaluating the probability of each answer under the default canonical tokenization, and selecting the answer with the highest probability.
Formally, given a question $\question$ with canonical tokenization $\tokens_{\question}^\ast$ and set of $K$ answers $\{\answer_i\}_{i=1}^{K}$ with canonical tokenizations $\{\tokens_{\answer_i}^\ast\}_{i=1}^K$, the classification is given by
\begin{equation*}
    \arg \max_i \llmdist(\tokens_{\answer_i}^\ast|\tokens_{\question}^\ast).
\end{equation*}

Alternatively, we can compute these probabilities over other tokenizations, for instance, by computing an approximation to the marginal:
\begin{equation*}
\arg \max_i \llmdist(\answer_i|\tokens_{\question}^\ast)\!=\!\arg \max_i \sum_{\tokens_{\answer_i}\models \answer_i}\llmdist(\tokens_{\answer_i}|\tokens_{\question}^\ast).
\end{equation*}

From prior discussion, we expect the approximate marginal to gradually converge to canonical.
However, we empirically find that, before convergence, there is a surprising increase in accuracy when weighting over non-canonical tokenizations compared to the canonical baseline.
\Cref{fig:acc_curve} shows accuracy for the marginal approximation as a function of the number of samples in three different question answering datasets: \textsc{HellaSwag}~\cite{hellaswag}, \textsc{SocialIQA}~\cite{socialiqa} and \textsc{OpenBookQA}~\cite{openbookqa}.
Due to computational constraints, we only evaluate on randomly sampled subsets of 1000 examples for each dataset.%

\begin{figure*}
    \centering
    \pgfplotstableread[col sep=semicolon,trim cells]{
    model   ; dataset    ; accuracy
    Llama2  ; HellaSwag  ; 0.596
    Gemma2B ; HellaSwag  ; 0.547
    Mamba   ; HellaSwag  ; 0.324
    Llama2  ; SocialIQA  ; 0.441
    Gemma2B ; SocialIQA  ; 0.487
    Mamba   ; SocialIQA  ; 0.391
    Llama2  ; OpenBookQA ; 0.308
    Gemma2B ; OpenBookQA ; 0.302
    Mamba   ; OpenBookQA ; 0.166
}\canonicaldata

\def\names{{"HellaSwag", "SocialIQA", "OpenBookQA"}}%
\def\tasks{{"shellaswag", "ssocial_iqa", "sopenbookqa"}}%
\def\models{{"Llama2", "Gemma2B", "Mamba"}}%
\begin{tikzpicture}
\begin{groupplot}[
    group style={
        group size=3 by 3,
        ylabels at=edge left,
        x descriptions at=edge bottom,
        horizontal sep=0.9cm,
        vertical sep=0.15cm,
    },
    y tick label style={
        /pgf/number format/fixed,
        /pgf/number format/fixed zerofill,
        /pgf/number format/precision=1,
        /pgf/number format/skip 0.,
    },
    yticklabel={\pgfmathparse{\tick*100}\pgfmathprintnumber{\pgfmathresult}},
    xtick distance=0.25,
    width=0.35\textwidth,
    height=3.5cm,
    xmajorgrids=true,
    ymajorgrids=true,
    xminorgrids=true,
    yminorgrids=true,
    xlabel={$\alpha$},
    ylabel={\small{}Accuracy (\%)},
    grid=both,
    grid style=dashed,
    axis on top=false,
    title style={font={\scshape}},
]
    \pgfplotsforeachungrouped \c in {0,...,8}{%
        \pgfmathparse{int(mod(\c,3))}
        \pgfmathsetmacro\j{\pgfmathresult}
        \pgfmathparse{int(\c/3)}
        \pgfmathsetmacro\i{\pgfmathresult}
        \pgfmathsetmacro\m{\models[\i]}
        \pgfmathsetmacro\t{\tasks[\j]}
        \pgfmathsetmacro\n{\names[\j]}
        \def\datapath{tikz/data/alpha_mixture_\m_\t_test.txt}
        \pgfmathtruncatemacro{\lpos}{\i+\j*3}
        \pgfplotstablegetelem{\lpos}{accuracy}\of\canonicaldata
        \pgfmathsetmacro\canonacc{\pgfplotsretval}
        \edef\tmp{%
            \noexpand\ifthenelse{\i=0}{
                \noexpand\nextgroupplot[title={\n}]
            }{\noexpand\nextgroupplot}
            \noexpand\addplot[thick,line join=bevel,palette-blue] table[x=i,y=ncan_can] {\datapath};
            \noexpand\addplot[mark=none,very thick,dashed,palette-blue,domain=0:1] {\canonacc};
        }\tmp%
    }
\end{groupplot}
\begin{groupplot}[
    group style={
        group size=3 by 3,
        ylabels at=edge right,
        horizontal sep=0.9cm,
        vertical sep=0.15cm,
    },
    ticks=none,
    axis line style=transparent,
    xmin=0, xmax=1,
    ymin=0, ymax=1,
    width=0.35\textwidth,
    height=3.5cm,
    axis on top=false,
    grid=none,
    ylabel style={rotate=180,font={\tt}},
]
    \pgfplotsforeachungrouped \c in {0,...,8}{%
        \pgfmathparse{int(mod(\c,3))}
        \pgfmathsetmacro\j{\pgfmathresult}
        \pgfmathparse{int(\c/3)}
        \pgfmathsetmacro\i{\pgfmathresult}
        \pgfmathsetmacro\m{\models[\i]}
        \edef\tmp{%
            \noexpand\ifthenelse{\j=2}{
                \noexpand\nextgroupplot[ylabel={\m}]
            }{\noexpand\nextgroupplot}
        }\tmp%
    }
\end{groupplot}
\end{tikzpicture}
    \caption{\textbf{Accuracy for mixture of canonical and non-canonical tokenizations.} Solid curves (\protect\tikz[baseline=-0.5ex]{\protect\draw[palette-blue,very thick] (0,0) -- (0.5,0);}) show accuracy for the mixture of non-canonical and canonical tokenizations across models and datasets. Dashed lines (\protect\tikz[baseline=-0.5ex]{\protect\draw[palette-blue,very thick,dashed] (0,0) -- (0.55,0);}) show the canonical baseline.%
    }\label{fig:mix-classifier}
\end{figure*}
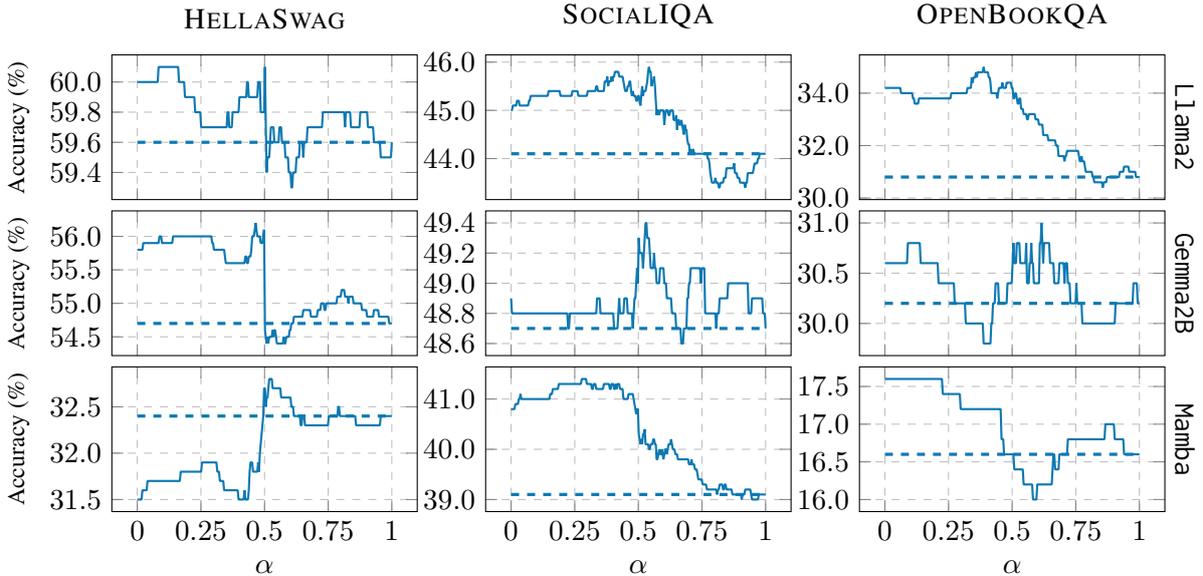

\begin{table*}[t]
    \centering
    \begin{tabular}{l|ccc|ccc|ccc}
        \hline
        \hline
                 & \multicolumn{3}{c|}{\textsc{HellaSwag}} & \multicolumn{3}{c|}{\textsc{SocialIQA}} & \multicolumn{3}{c}{\textsc{OpenBookQA}}\\
                 & \texttt{Llama3} & \texttt{Gemma} & \texttt{Mamba} & \texttt{Llama2} & \texttt{Gemma} & \texttt{Mamba} & \texttt{Llama2} & \texttt{Gemma} & \texttt{Mamba}\\
        \hline
        \textsc{Mixture} & \textbf{59.7} & \textbf{55.8} & 31.6 & \textbf{44.8} & \textbf{48.8} & \textbf{39.8} & \textbf{34.0} & \textbf{30.6} & \textbf{17.6}\\
        \textsc{Canonical} & 59.6 & 54.7 & \textbf{32.4} & 44.1 & 48.7 & 39.1 & 30.8 & 30.2 & 16.6\\
        \hline
        \hline
    \end{tabular}
    \caption{\textbf{Mixture accuracy after $\alpha$ tuning.} The first row shows accuracy values for the mixture, while the second row shows the canonical baseline. Entries in \textbf{bold} indicate highest accuracy.}\label{tab:mix-tuned-acc}
\end{table*}

By parameter tuning the number of samples, we are able to achieve a consistent performance increase in accuracy, as \Cref{tab:tuned-acc} shows.
Tuning the number of samples consisted of sampling 256 samples from a 1000 examples validation hold-out subset, computing the accuracy for 256 trials of 256-choose-$k$ samples, and taking the $k$ which maximizes average accuracy on the hold-out subset.
This $k$ is then used to sample that number of tokenizations in the test set.
The precise values of $k$ are shown in \Cref{tab:full-tuned-acc} in appendix.

To further understand how much non-canonical tokenizations play a role in this improvement, and find out where the signal in tokenization comes from, we construct a mixture of canonical and non-canonical tokenizations. We evaluate the improvement in accuracy, effectively measuring how much non-canonicity plays a role in the downstream task.
More formally, we compute
\begin{align*}
    \arg \max_i\enspace \alpha&\cdot\llmdist\!\left(\tokens_{\answer_i}^\ast|\tokens_{\question}^\ast,\tokens_{\answer_1}^\ast\vee \tokens_{\answer_2}^\ast\vee\ldots\vee\tokens_{\answer_k}^\ast\right)\,+\nonumber \\
    (1-\alpha)&\cdot\llmdist(\tilde{\tokens}_{\answer_i}|\tokens_{\question}^\ast,\tilde{\tokens}_{\answer_1}\vee \tilde{\tokens}_{\answer_2}\vee\ldots\vee \tilde{\tokens}_{\answer_k}),
\end{align*}
where $0\leq\alpha\leq 1$ and we use $\tilde{\tokens}_{\answer_i}$ to denote the set of all non-canonical tokenizations of $\answer_i$, i.e.\ $\tilde{\tokens}_{\answer_i}=\{\tokens:(\tokens\neq\tokens_{\answer_i}^\ast)\wedge(\tokens\models \answer_i)\}$.
When $\alpha=1$, the equation above reduces to the standard canonical tokenization baseline, while $\alpha=0$ weighs only non-canonical tokenizations of the same answer.
To make sure that both terms are on the same scale, we condition the distributions on the possible answers, yielding two classifiers over answers instead of tokenizations.

We approximate $\llmdist(\tilde{\tokens}_{\answer_i}|\tokens_{\question}^\ast,\tilde{\tokens}_{\answer_1}\vee\ldots\vee\tilde{\tokens}_{\answer_k})$ by computing the (unbiased) marginal estimate over tokenizations that are \emph{not} canonical, i.e.,
\begin{align*}\label{eq:noncanon-mar} \allowdisplaybreaks
    &\llmdist(\tilde{\tokens}_{\answer_i}|\tokens_{\question}^\ast,\tilde{\tokens}_{\answer_1}\vee\ldots\vee\tilde{\tokens}_{\answer_k}) \propto \llmdist(\tilde{\tokens}_{\answer_i}|\tokens_{\question}^\ast)\\
    &\quad =\mathop{\mathbb{E}}_{\tokens\sim\proposaldist(\tokens|\answer_i,\tokens_{\question}^\ast)}\left[\frac{\llmdist\!\left(\tokens,\answer_i|\tokens_{\question}^\ast\right)}{\proposaldist\!\left(\tokens|\answer_i,\tokens_{\question}^\ast\right)}\cdot\liv\tokens\neq \tokens_{\answer_i}^\ast\riv\right]\nonumber\\
    &\quad\approx\frac{1}{N}\sum_{j=1}^N\frac{\llmdist\!\left(\tokens^{(j)}|\tokens_{\question}^\ast\right)}{\proposaldist\!\left(\tokens^{(j)}|\answer_i,\tokens_{\question}^\ast\right)}\cdot\left\liv\tokens^{(j)}\neq \tokens_{\answer_i}^\ast\right\riv,\nonumber
\end{align*}
In practice, this amounts to zeroing-out all importance weights that \emph{are} canonical.
The first term of the mixture is computed by simply normalizing the standard canonical probability over the canonical tokenizations of possible answers
\begin{equation*}
    \llmdist(\tokens_{\answer_i}^\ast|\tokens_{\question}^\ast,\tokens_{\answer_1}^\ast\vee\ldots\vee\tokens_{\answer_k}^\ast)\propto\llmdist(\tokens_{\answer_i}^\ast|\tokens_{\question}^\ast).
\end{equation*}
The resulting mixture is then a weighted version of the marginal $\llmdist(\answer_i|\tokens_c^\ast,\answer_1\vee\ldots\vee \answer_k)$ where we adjust the mass attributed to canonical according to parameter $\alpha$.
This allows us to inspect how the model behaves when mass is ``shoveled'' around from non-canonical to canonical and vice versa.

\Cref{fig:mix-classifier} shows how this mixture behaves for different values of $\alpha$ downstream, while \Cref{tab:mix-tuned-acc} shows the performance change when tuning $\alpha$ in the validation set and applying it on the test set. Precise tuned $\alpha$ values are shown in \Cref{tab:full-mix-tuned-acc}.

These experiments show that there is clear and significant signal in non-canonical tokenizations to the point that we are able to achieve a consistent increase in accuracy, suggesting that non-canonical tokenizations do indeed retain meaningful information in LLMs, and hopefully motivating further research in this direction.

\section{Conclusion}

Modern language models make the assumption that text is represented by a unique canonical tokenization equivalent to the text itself.
We showed that not only is the space of possible tokenizations exponential, but prove that reasoning probabilistically about this space is hard.
We then showed empirical evidence that suggests that, in practice, not only is the canonical tokenization the most likely tokenization, but it is also very close to the probability of the text itself (i.e.\ the marginal probability of the text summed over tokenizations).
Despite this, we present surprising evidence of significant signal in non-canonical tokenizations, thus motivating further research on non-canonical tokenizations.

\section{Limitations}

Having to evaluate several tokenizations instead of just the canonical one is an obvious limitation to this work, as it requires many more (possibly costly) calls to the LLM.
Due to computational constraints, we were also unable to provide evaluation results with regards to larger LLMs.

\section*{Acknowledgements}

This work was funded in part by the DARPA ANSR program under award FA8750-23-2-0004, the DARPA PTG Program under award HR00112220005, and
NSF grant \#IIS-1943641.
This work was done in part while BW and GVdB were visiting the Simons Institute for the Theory of Computing.

\bibliography{example_paper}

\appendix

\section{Problems}

For the purposes of studying the complexity of inference problems on induced tokenization distributions, we use $\llmclass$ to denote a class (set) of autoregressive large language models, and make the assumption that this set covers all possible autoregressive distributions:

\begin{assumption}[Expressivity of LLMs] \label{ass: expressivity}
    We assume that $\llmclass$ is sufficiently expressive: given any token sequence $\tokens = (\token_1, ..., \token_\numtokens)$,
    and sequence $\delta_1, ..., \delta_\numtokens$ with $\delta_i \in (0, 1)$ for all $i$, there exists $\llmdist \in \llmclass$ such that $\llmdist(\token_i | \token_1, ..., \token_{i-1}) = \delta_{i}$ for all $i = 1, ..., m$. %
\end{assumption}

Note that we do not require that the conditional probability take the value $0$ or $1$, as this cannot be expressed using logits. We also need to make the (reasonable) assumption that the conditional probability distribution of LLMs can be computed in polynomial time: 

\begin{assumption}[Complexity of LLMs] \label{ass: llmtime}
    We assume that for any $\llmdist \in \llmclass$, and any sequence of tokens $\tokens=(\token_1,\dots, \token_\numtokens)$, we can compute the distribution $\llmdist(\token_i | \token_1, ..., \token_{i-1})$ for any $i=1, \dots, m$ in polynomial time in $|\tokens|$.
\end{assumption}

Now, we consider two inference problems related to the induced tokenization distribution; namely, computing the most likely tokenization, and marginal string probability (a more formal statement of Problems \ref{prob: mpe} and \ref{prob: par}):

 \begin{problem}[Most Likely Tokenization] \label{prob: mpe_apx}
   Given a string $\sentence$, vocabulary $\vocab$, and an autoregressive LLM $\llmdist \in \llmclass$ over $\vocab$, and a threshold $\epsilon > 0$, we define the most likely tokenization problem $\mpe(\sentence, \vocab, \llmdist, \epsilon)$ as deciding whether:
   \begin{equation}
       \max_{\tokens} \llmdist(\tokens, \sentence) > \epsilon
   \end{equation}
\end{problem}

\begin{problem}[Marginal String Probability] \label{prob: par_apx}
   Given a string $\sentence$, vocabulary $\vocab$, and an autoregressive LLM $\llmdist \in \llmclass$ over $\vocab$, the marginal string probability problem $\partition(\sentence, \vocab, \llmdist)$ is to compute
   \begin{equation}
       \sum_{\tokens} \llmdist(\tokens, \sentence)
   \end{equation}
\end{problem}

\section{Hardness} \label{apxsec:hardness}

In this section, we show that Problems \ref{prob: mpe_apx} and \ref{prob: par_apx} are both NP-hard. This will be achieved using a reduction from 3-SAT:

\begin{definition}[3-SAT]
    Given a set of Boolean variables $\variable_1, ..., \variable_n$, a Boolean formula is in 3-CNF if it is of the form:
    \begin{equation}
        \bigwedge_{k = 1}^{K} \clause_k
    \end{equation}
    where each clause $\clause_k$ is a disjunction of at most 3 literals (a literal is either a variable or its negation). 
    The 3-SAT problem is that of determining if a given 3-CNF formula is satisfiable.
\end{definition}

\thmMPE*

\begin{proof}
    We begin by showing hardness. Given an instance of 3-SAT, we construct an instance of \mpe{} such that the 3-CNF is satisfiable iff the maximal probability is above a certain threshold.  \\
    
 Let $\cnf = \bigwedge_{k=1}^{K} \clause_i$ be a 3-CNF formula consisting of $K$ clauses over $n$ Boolean variables, where $\clause_k = \literal_{k, 1} \vee \literal_{k, 2} \vee \literal_{k,3}$, and $\literal_{k, j}$ is a literal. For convenience, we write $\literalidx_{k, j}$ for the index of the variable $\literal_{k, j}$ refers to, and $\literalsign_{k, j}$ to be a Boolean variable which is true iff it is a positive literal, i.e. $\literal_{k, j} := (\variable_{\literalidx_{k, j}} = \literalsign_{k, j})$.  \\

 We now define a string $\sentence$ of length $3n + K$:
\begin{center}
abcabcabcabc...ddd...
\end{center}
where $\character_{3i + 1} = \mquote{a}, \character_{3i + 2} = \mquote{b}, \character_{3i + 1} = \mquote{c}$ for $0 \leq i \leq n-1$, and $\character_{i} = \mquote{d}$ for $3n+1 \leq i \leq 3n+K$.
We also define a vocabulary $\vocab = \{``a", ``bc", ``ab", ``c", ``d"\}$. \\

Finally, we define the LLM conditional probability distribution as follows. 
\begin{multline*}
	\llmdist(\token_{i}|\token_{0}, ..., \token_{i-1}) = \\
 \begin{cases}
		 0.45 & \text{if } i = 0 \wedge (\token_i = ``a"  \vee \token_i = ``ab")\\
          0.033 & \text{if } i = 0 \wedge \neg (\token_i = ``a"  \vee \token_i = ``ab")\\
		 0.9 & \text{if } i < 2n \wedge \token_{i-1} = ``a" \wedge \token_i = ``bc" \\
          0.025 & \text{if } i < 2n \wedge \token_{i-1} = ``a" \wedge \token_i \neq ``bc" \\
		 0.9 & \text{if } i < 2n \wedge \token_{i-1} = ``ab" \wedge \token_i = ``c" \\
          0.025 & \text{if } i < 2n \wedge \token_{i-1} = ``ab" \wedge \token_i \neq ``c" \\
		 0.45 & \text{if } i < 2n \wedge (\token_{i-1} = ``bc" \vee \token_{i-1} = ``c")  \\
        &\wedge (\token_i = ``a" \vee \token_i = ``ab") \\
          0.033 & \text{if } i < 2n \wedge (\token_{i-1} = ``bc" \vee \token_{i-1} =``c") \\
          &\wedge \neg (\token_i = ``a" \vee \token_i = ``ab") \\
          0.2 & \text{if } i < 2n \wedge (\token_{i-1} = ``d") \\
          0.9 & \text{if } i \geq 2n \wedge (\token_{i} = ``d") \\
          &\wedge \satclause(i + 1 - 2n, \tokens) \\
          0.025 & \text{if } i \geq 2n \wedge (\token_{i} \neq ``d") \\
          &\wedge \satclause(i + 1 - 2n, \tokens) \\
          0.1 & \text{if } i \geq 2n \wedge (\token_{i} = ``d") \\
          & \wedge \neg \satclause(i + 1 - 2n, \tokens) \\
          0.225 & \text{if } i \geq 2n \wedge (\token_{i} \neq ``d") \\
          &\wedge \neg \satclause(i + 1 - 2n, \tokens)
	\end{cases}
\end{multline*} 
Here, $\satclause(k, \tokens)$ is a predicate representing the satisfaction of the $k^{\text{th}}$ CNF clause. In particular, there is a straightforward bijection between valid tokenizations and instantiations of the CNF variables, by setting $\variable_i := (\token_{2i} = ``a")$. Then the $k^{\text{th}}$ clause is satisfied iff at least one literal is satisfied,
i.e.:
\begin{equation}
    \satclause(k, \tokens) = 
    \begin{cases}
        \text{True} & \text{if } \exists j. \; (\token_{2\literalidx_{k, j}} = ``a") = \literalsign_{k, j} \\
        \text{False} & \text{otherwise}
    \end{cases}
\end{equation} 

We define $\tokens \models \cnf$ iff $\bigwedge_{k=1}^{K} \satclause(k, \tokens)$, i.e.\ all clauses are satisfied. Now we claim that the 3-CNF formula $\cnf$ is satisfiable iff 
$\max_{\tokens} \llmdist(\tokens, \sentence) > 0.5 (0.45)^{n} (0.9)^{n + K}$. %
We begin by noting that all tokenizations of the string $\sentence$ are of the same length $2n + K$, since each ``abc'' sequence must be split into either (``ab'', ``c'') or (``a'', ``bc''), and the ``ddd...'' sequence must be tokenized into $K$ ``d'' tokens. The probability of any valid tokenization $\tokens$ is thus given by: 
\begin{align*}
    \llmdist(\tokens, \sentence) &= \prod_{i=0}^{2n+K-1} \llmdist(\token_i | \token_1, \dots, \token_{i-1} ) \\
    &=(0.45)^{n} (0.9)^{n}  \\
   & \;\;\;\;\;\;  \prod_{i=2n}^{2n+K-1} \llmdist(\token_i | \token_1, \dots, \token_{i-1} ) \\
    &= (0.45)^{n} (0.9)^{n} \\
    & \;\;\;\;\;\; \prod_{i=2n}^{2n+K-1} \llmdist(``d"| \token_1, \dots, \token_{i-1} )
\end{align*}

Note that the remaining conditional probabilities are all either $0.9$ or $0.025$; thus, $p(\tokens, \sentence) > 0.5 (0.45)^{n} (0.9)^{n + K}$ iff all of these conditional probabilities are $0.9$. Since all of the tokens $\token_{i}$ for $i \geq 2n$ (for $\sentence)$ are ``d'', this happens iff $\tokens \models \cnf$, and the 3-CNF $\cnf$ is satisfiable. Thus \mpe{} is NP-hard.

To show NP-completeness, we note all tokenizations have length $2n + K$ and so oracle calls to the LLM take polynomial time in $n, K$ by Assumption \ref{ass: llmtime}. If the answer to $\mpe$ is Yes, then there exists a tokenization $\tokens'$ with $\llmdist(\tokens, \sentence) > t$ which acts as the certificate. This certificate can be checked in polynomial time; thus \mpe{} is in NP.
\end{proof}

We now move to the problem of computing the marginal probability over all valid tokenizations of $\sentence$. The proof of this result relies on a similar construction to the proof of Theorem \ref{thm:hardmpe}. 

\thmMarg*

\begin{proof}
    Given an instance of \#3-SAT, we construct an instance of \partition{} such that the count of the 3-CNF formula can be easily determined by the marginal string probability.

We define the 3-CNF formula $\cnf$, string $\sentence$, and vocabulary $\vocab$ as in the proof of Theorem \ref{thm:hardmpe}. However, we define a slightly different distribution for the LLM:

\begin{multline*}
	\llmdist(\token_{i}|\token_{0}, ..., \token_{i-1}) = \\
 \begin{cases}
		 0.45 & \text{if } i = 0 \\
   &\wedge (\token_i = ``a" \vee \token_i = ``ab")\\
          0.033 & \text{if } i = 0 \\
          &\wedge \neg(\token_i = ``a" \vee \token_i = ``ab")\\
		 0.9 & \text{if } i < 2n \\
   &\wedge (\token_{i-1} = ``a" \wedge \token_i = ``bc") \\
          0.025 & \text{if } i < 2n \\
          &\wedge (\token_{i-1} = ``a" \wedge \token_i \neq ``bc") \\
		 0.9 & \text{if } i < 2n \\
   &\wedge (\token_{i-1} = ``ab" \wedge \token_i = ``c") \\
          0.025 & \text{if } i < 2n \\
          &(\wedge \token_{i-1} = ``ab" \wedge \token_i \neq ``c") \\
		 0.45 & \text{if } i < 2n \\ 
   &\wedge (\token_{i-1} = ``bc" \vee \token_{i-1} = ``c")  \\
   &\wedge (\token_i = ``a" \vee \token_i = ``ab") \\
          0.033 & \text{if } i < 2n \\
          &\wedge (\token_{i-1} = ``bc" \vee \token_{i-1} = `c")  \\
          &\wedge \neg (\token_i = ``a" \vee \token_i = ``ab") \\
          0.2 & \text{if } i < 2n \wedge (\token_{i-1} = ``d") \\
          1-0.5^{n + K + 1} & \text{if } i \geq 2n 
          \wedge (\token_{i} = ``d") \\
          &\wedge \satclause(i + 1 - 2n, \tokens) \\
          \frac{0.5^{n + K + 1}}{4} & \text{if } i \geq 2n \wedge (\token_{i} \neq ``d") \\
          &\wedge \satclause(i + 1 - 2n, \tokens) \\
          0.5^{n + K + 1} & \text{if } i \geq 2n \wedge (\token_{i} = ``d") \\
          &\wedge \neg \satclause(i + 1 - 2n, \tokens) \\
          \frac{1-0.5^{n + K + 1}}{4} & \text{if } i \geq 2n \wedge (\token_{i} \neq ``d") \\
          &\wedge \neg \satclause(i + 1 - 2n, \tokens)
	\end{cases}
\end{multline*} 

The difference between this distribution and that in Theorem \ref{thm:hardmpe} is the last 4 cases, where the probability is dependent on the number of CNF variables $n$. Now, we will show that the model count of the CNF formula $\cnf$ is equal to $C \in \{0, ..., 2^n\}$ iff $(C - 0.5) (0.45)^{n} (0.9)^{n} < \sum_{\tokens} \llmdist(\tokens, \sentence) < (C + 0.5) (0.45)^{n} (0.9)^{n}$.

As before, the probability of any valid tokenization $\tokens$ is given by: 
\begin{align*}
    \llmdist(\tokens, \sentence) =  &(0.45)^{n} (0.9)^{n} \\& \prod_{i=2n}^{2n+K-1} \llmdist(``d"| \token_1, ..., \token_{i-1} )
\end{align*}
Now, consider the valid tokenizations $\tokens$ of $\sentence$ that correspond to a satisfying assignment of $\cnf$. For any such tokenization, we have that $\satclause(k, \tokens)$ for all $k = 1, ..., K$ and so its probability 
\begin{align*}
    \llmdist(\tokens, \sentence) &= (1-0.5^{n + K + 1})^K (0.45)^{n} (0.9)^{n} \\
    &\geq (1 - 0.5^{n + K + 1}) (0.45)^{n} (0.9)^{n} \\ 
    &\geq (1 - 0.5^{n + 2}) (0.45)^{n} (0.9)^{n}
\end{align*}
On the other hand, for any valid tokenization which does not correspond to a satisfying assignment $\cnf$, there exists a $k$ s.t. $\neg \satclause(k, \tokens)$, and so all such tokenizations have probability $\llmdist(\tokens, \sentence) < \frac{(0.5)^{n + K + 1}}{4} (0.45)^{n} (0.9)^{n}$.

Thus, if $\cnf$ has $C$ satisfying assignments, then we have that 
\begin{align*}
    \sum_{\tokens} \llmdist(\tokens, \sentence) &\geq \sum_{\tokens\models \cnf} \llmdist(\tokens, \sentence) \\
    &= C (1-0.5^{n + K})^K (0.45)^{n} (0.9)^{n} \\
    &\geq C (1 - 0.5^{n + 2}) (0.45)^{n} (0.9)^{n} \\ &> (C - 0.5) (0.45)^{n} (0.9)^{n}
\end{align*} 
where the last inequality follows because $C$ is bounded above by $2^n$. Also, since there are $2^n$ valid tokenizations, we have that 
\begin{align*}
    \sum_{\tokens} \llmdist(\tokens, \sentence) &= \sum_{\tokens \models \cnf} \llmdist(\tokens, \sentence) + \sum_{\tokens \not \models \cnf} \llmdist(\tokens, \sentence) \\
    &< C (0.45)^{n} (0.9)^{n} \\
    & \;\;\;\; + 2^n \frac{(0.5)^{n + K + 1}}{4} (0.45)^{n} (0.9)^{n} \\
    &= C (0.45)^{n} (0.9)^{n} \\
    &\;\;\;\; + (0.5)^{K + 3} (0.45)^{n} (0.9)^{n} \\
    &< (C + 0.5) (0.45)^{n} (0.9)^{n}
\end{align*}
We have shown that the model count of the CNF formula $\cnf$ is equal to $C \in \{0, ..., 2^n\}$ iff $(C - 0.5) (0.45)^{n} (0.9)^{n} < \sum_{\tokens} \llmdist(\tokens, \sentence) < (C + 0.5) (0.45)^{n} (0.9)^{n}$. Given $\sum_{\tokens} \llmdist(\tokens, \sentence)$, we can compute the (unique) value of $C \in \{0, ..., 2^n\}$ for which this holds by binary search, with complexity $O(n)$. Thus we have reduced \#3-SAT to $\partition{}$ and so $\partition{}$ is \#P-hard.

\end{proof}

\section{Experiments}

\begin{table}[ht]
    \centering
    \resizebox{\columnwidth}{!}{
    \begin{tabular}{l|ccc|c}
        \hline
        \hline
         & \multicolumn{1}{l}{\llama{}} & \multicolumn{1}{l}{\gemma{}} & \multicolumn{1}{l|}{\mamba{}} &\\
        \hline
         \textsc{Can}           & 59.6 & 54.7 & \textbf{32.4} &\parbox[t]{2mm}{\multirow{4}{*}{\rotatebox[origin=c]{270}{\scriptsize\strut\textsc{HellaSwag}}}}\\
         \textsc{Mar}           & \textbf{60.4} & \textbf{55.8} & 31.6 & \\
         \textsc{Diff}          & 0.81 & 1.10 & -0.80 & \\
         \textsc{\# Samples}    &   41 &  256 &  256 & \\
        \hline
         \textsc{Can}           & 44.1 & 48.7 & 39.1 & \parbox[t]{2mm}{\multirow{4}{*}{\rotatebox[origin=c]{270}{\scriptsize\strut\textsc{SocialIQA}}}}\\
         \textsc{Mar}           & \textbf{45.4} & \textbf{48.9} & \textbf{40.9} & \\
         \textsc{Diff}          & 1.33 & 0.21 & 1.80 & \\
         \textsc{\# Samples}    &  238 &  140 &  256 & \\
        \hline
         \textsc{Can}           & 30.8 & 30.2 & 16.6 & \parbox[t]{2mm}{\multirow{4}{*}{\rotatebox[origin=c]{270}{\scriptsize\strut\textsc{OpenBookQA}}}}\\
         \textsc{Mar}           & \textbf{33.1} & \textbf{31.0} & \textbf{22.3} & \\
         \textsc{Diff}          & 2.33 & 0.76 & 5.69 & \\
         \textsc{\# Samples}    &    7 &  163 &    1 & \\
        \hline
        \textsc{Avg Diff}       & 1.49 & 0.69 & 2.23 & \\
        \hline
        \hline
    \end{tabular}
    }
    
    \caption{\textbf{Accuracy after parameter tuning.} See \Cref{tab:tuned-acc} for more details. \textsc{\# Samples} shows the number of samples after tuning on the validation set.}\label{tab:full-tuned-acc}
\end{table}

\begin{table}[ht!]
    \centering
    \resizebox{0.9\columnwidth}{!}{
    \begin{tabular}{l|cc|c}
        \hline
        \hline
         & \multicolumn{1}{l}{\textsc{Mixture}} & \multicolumn{1}{l|}{\phantom{$\neg$}\textsc{Canonical}} &\\
        \hline
        & \multicolumn{2}{c|}{Accuracy (\%)} & \\
        \hline
        \llama{} & \textbf{59.7} & 59.6 &\parbox[t]{2mm}{\multirow{3}{*}{\rotatebox[origin=c]{270}{\tiny\strut\textsc{HellaSwag}}}}\\
        \gemma{} & \textbf{55.8} & 54.7 & \\
        \mamba{} & 31.6 & \textbf{32.4} & \\
        \hline
        \llama{} & \textbf{44.8} & 44.1 & \parbox[t]{2mm}{\multirow{3}{*}{\rotatebox[origin=c]{270}{\tiny\strut\textsc{SocialIQA}}}}\\
        \gemma{}  & \textbf{48.8} & 48.7 &  \\
        \mamba{}  & \textbf{39.8} & 39.1 &  \\
        \hline
        \llama{} & \textbf{34.0} & 30.8 & \parbox[t]{2mm}{\multirow{3}{*}{\rotatebox[origin=c]{270}{\tiny\strut\textsc{OpenBookQA}}}}\\
        \gemma{}  & \textbf{30.6} & 30.2 &  \\
        \mamba{}  & \textbf{17.6} & 16.6 &  \\
        \hline
        \hline
        & \multicolumn{2}{c|}{Fine-tuned $\alpha$ values} &\\
        \hline
        \llama{} & 0.3445 & --- & \parbox[t]{2mm}{\multirow{3}{*}{\rotatebox[origin=c]{270}{\tiny\strut\textsc{HellaSwag}}}}\\
        \gemma{}  & 0.6421 & --- & \\
        \mamba{}  & 0.5151 & --- & \\
        \hline
        \llama{} & 0.3244 & --- & \parbox[t]{2mm}{\multirow{3}{*}{\rotatebox[origin=c]{270}{\tiny\strut\textsc{SocialIQA}}}}\\
        \gemma{}  & 0.4816 & --- & \\
        \mamba{}  & 0.0000 & --- & \\
        \hline
        \llama{} & 0.0201 & --- & \parbox[t]{2mm}{\multirow{3}{*}{\rotatebox[origin=c]{270}{\tiny\strut\textsc{OpenBookQA}}}}\\
        \gemma{}  & 0.5753 & --- & \\
        \mamba{}  & 0.0000 & --- & \\                
        \hline
        \hline
    \end{tabular}
    }
    \caption{\textbf{Mixture accuracy after $\alpha$ tuning.} See \Cref{tab:mix-tuned-acc} for more details. The lower portion of the table shows $\alpha$ values after tuning on the validation set.}\label{tab:full-mix-tuned-acc}
\end{table}

\Cref{fig:acc-curve-val} shows marginal estimates across models and datasets on the validation set as a function of the number of samples. \Cref{fig:mix-classifier} shows mixture performance as a function of parameter $\alpha$ on the validation set.

\begin{figure*}[t]
\centering
\pgfplotstableread[col sep=semicolon,trim cells]{
    model   ; dataset    ; accuracy
    Llama2  ; HellaSwag  ; 0.581
    Gemma2B ; HellaSwag  ; 0.529
    Mamba   ; HellaSwag  ; 0.295
    Llama2  ; SocialIQA  ; 0.486
    Gemma2B ; SocialIQA  ; 0.517
    Mamba   ; SocialIQA  ; 0.401
    Llama2  ; OpenBookQA ; 0.308
    Gemma2B ; OpenBookQA ; 0.302
    Mamba   ; OpenBookQA ; 0.166
}\canonicaldata

\def\datasizes{{1000,1000,1000}}
\def\names{{"SocialIQA", "OpenBookQA"}}
\def\tasks{ssocial_iqa_val, sopenbookqa_val}
\def\xaxislabels{{"Number of samples","\phantom{Number of samples}"}}
\begin{tikzpicture}
    \pgfmathsetmacro{\size}{1000}
    \begin{axis}[
        title={\textsc{HellaSwag}},
        xtick distance={64},
        ytick distance={0.1},
        yticklabel={\pgfmathparse{\tick*100}\pgfmathprintnumber{\pgfmathresult}},
        width=0.33\textwidth,
        xlabel={\phantom{Number of samples}},
        ylabel={Accuracy (\%)},
        xmajorgrids=true,
        ymajorgrids=true,
        xminorgrids=true,
        yminorgrids=true,
        grid=both,
        grid style=dashed,
        axis on top=false,
    ]
    \def\datapath{tikz/data/shellaswag_val_Llama2_\size_256_MarLM_llm.csv}
    \addplot[very thick,palette-green,select coords between index={0}{255}] table
        [x=x,y=mean,name path=mu,col sep=comma] {\datapath};
    \addplot[name path=std_high,draw=none,select coords between index={0}{255}] table
        [x=x,y expr=\thisrow{mean}+\thisrow{stdev},col sep=comma] {\datapath};
    \addplot[name path=std_low,draw=none,select coords between index={0}{255}] table
        [x=x,y expr=\thisrow{mean}-\thisrow{stdev},col sep=comma] {\datapath};
    \addplot[fill=palette-green,opacity=0.30] fill between [of=std_high and std_low];
    \pgfplotstablegetelem{0}{accuracy}\of\canonicaldata
    \pgfmathsetmacro\tmp{\pgfplotsretval}
    \addplot[mark=none,very thick,dashed,palette-green,domain=0:255] {\tmp};

    \def\datapath{tikz/data/shellaswag_val_Gemma2B_\size_256_MarLM_llm.csv}
    \addplot[very thick,palette-orange,select coords between index={0}{255}] table
        [x=x,y=mean,name path=mu,col sep=comma] {\datapath};
    \addplot[name path=std_high,draw=none,select coords between index={0}{255}] table
        [x=x,y expr=\thisrow{mean}+\thisrow{stdev},col sep=comma] {\datapath};
    \addplot[name path=std_low,draw=none,select coords between index={0}{255}] table
        [x=x,y expr=\thisrow{mean}-\thisrow{stdev},col sep=comma] {\datapath};
    \addplot[fill=palette-orange,opacity=0.30] fill between [of=std_high and std_low];
    \pgfplotstablegetelem{1}{accuracy}\of\canonicaldata
    \pgfmathsetmacro\tmp{\pgfplotsretval}
    \addplot[mark=none,very thick,dashed,palette-orange,domain=0:255] {\tmp};

    \def\datapath{tikz/data/shellaswag_val_Mamba_\size_256_MarLM_llm.csv}
    \addplot[very thick,palette-blue,select coords between index={0}{255}] table
        [x=x,y=mean,name path=mu,col sep=comma] {\datapath};
    \addplot[name path=std_high,draw=none,select coords between index={0}{255}] table
        [x=x,y expr=\thisrow{mean}+\thisrow{stdev},col sep=comma] {\datapath};
    \addplot[name path=std_low,draw=none,select coords between index={0}{255}] table
        [x=x,y expr=\thisrow{mean}-\thisrow{stdev},col sep=comma] {\datapath};
    \addplot[fill=palette-blue,opacity=0.30] fill between [of=std_high and std_low];
    \pgfplotstablegetelem{2}{accuracy}\of\canonicaldata
    \pgfmathsetmacro\tmp{\pgfplotsretval}
    \addplot[mark=none,very thick,dashed,palette-blue,domain=0:255] {\tmp};
    \end{axis}
\end{tikzpicture}
\foreach \d [count=\j] in \tasks {
    \begin{tikzpicture}
        \pgfmathsetmacro{\size}{\datasizes[\j-1]}
        \begin{axis}[
            title={\textsc{\pgfmathparse{\names[\j-1]}\pgfmathresult}},
            xtick distance={64},
            ytick distance={0.05},
            width=0.33\textwidth,
            yticklabel={\pgfmathparse{\tick*100}\pgfmathprintnumber{\pgfmathresult}},
            xlabel={\pgfmathparse{\xaxislabels[\j-1]}\pgfmathresult},
            xmajorgrids=true,
            ymajorgrids=true,
            xminorgrids=true,
            yminorgrids=true,
            grid=both,
            grid style=dashed,
            axis on top=false,
        ]
        \def\datapath{tikz/data/\d_Llama2_\size_256_MarLM_llm.csv}
        \addplot[very thick,palette-green,select coords between index={0}{255}] table
            [x=x,y=mean,name path=mu,col sep=comma] {\datapath};
        \addplot[name path=std_high,draw=none,select coords between index={0}{255}] table
            [x=x,y expr=\thisrow{mean}+\thisrow{stdev},col sep=comma] {\datapath};
        \addplot[name path=std_low,draw=none,select coords between index={0}{255}] table
            [x=x,y expr=\thisrow{mean}-\thisrow{stdev},col sep=comma] {\datapath};
        \addplot[fill=palette-green,opacity=0.30] fill between [of=std_high and std_low];
        \pgfmathtruncatemacro{\lpos}{(\j)*3}
        \pgfplotstablegetelem{\lpos}{accuracy}\of\canonicaldata
        \pgfmathsetmacro\tmp{\pgfplotsretval}
        \addplot[mark=none,very thick,dashed,palette-green,domain=0:255] {\tmp};

        \def\datapath{tikz/data/\d_Gemma2B_\size_256_MarLM_llm.csv}
        \addplot[very thick,palette-orange,select coords between index={0}{255}] table
            [x=x,y=mean,name path=mu,col sep=comma] {\datapath};
        \addplot[name path=std_high,draw=none,select coords between index={0}{255}] table
            [x=x,y expr=\thisrow{mean}+\thisrow{stdev},col sep=comma] {\datapath};
        \addplot[name path=std_low,draw=none,select coords between index={0}{255}] table
            [x=x,y expr=\thisrow{mean}-\thisrow{stdev},col sep=comma] {\datapath};
        \addplot[fill=palette-orange,opacity=0.30] fill between [of=std_high and std_low];
        \pgfmathtruncatemacro{\lpos}{1+(\j)*3}
        \pgfplotstablegetelem{\lpos}{accuracy}\of\canonicaldata
        \pgfmathsetmacro\tmp{\pgfplotsretval}
        \addplot[mark=none,very thick,dashed,palette-orange,domain=0:255] {\tmp};

        \def\datapath{tikz/data/\d_Mamba_\size_256_MarLM_llm.csv}
        \addplot[very thick,palette-blue,select coords between index={0}{255}] table
            [x=x,y=mean,name path=mu,col sep=comma] {\datapath};
        \addplot[name path=std_high,draw=none,select coords between index={0}{255}] table
            [x=x,y expr=\thisrow{mean}+\thisrow{stdev},col sep=comma] {\datapath};
        \addplot[name path=std_low,draw=none,select coords between index={0}{255}] table
            [x=x,y expr=\thisrow{mean}-\thisrow{stdev},col sep=comma] {\datapath};
        \addplot[fill=palette-blue,opacity=0.30] fill between [of=std_high and std_low];
        \pgfmathtruncatemacro{\lpos}{2+(\j)*3}
        \pgfplotstablegetelem{\lpos}{accuracy}\of\canonicaldata
        \pgfmathsetmacro\tmp{\pgfplotsretval}
        \addplot[mark=none,very thick,dashed,palette-blue,domain=0:255] {\tmp};
        \end{axis}
    \end{tikzpicture}
}
\caption{\textbf{Accuracy of approximated marginal over number of samples on the validation set.} Curves in solid \protect\tikz[baseline=-0.5ex]{\protect\draw[gray,very thick] (0,0) -- (0.5,0);} show marginal mean accuracy, shaded areas \protect\tikz{\protect\draw[gray!70!white,fill=gray!40!white,fill opacity=0.3] (0,0) rectangle ++(0.5,0.25);} show marginal standard deviation, dashed lines \protect\tikz[baseline=-0.5ex]{\protect\draw[gray,very thick,dashed] (0,0) -- (0.5,0);} show canonical baseline across \textcolor{palette-orange}{\textbf{\gemma{}}}, \textcolor{palette-green}{\textbf{\llama{}}} and \textcolor{palette-blue}{\textbf{\mamba{}}}.}\label{fig:acc-curve-val}
\end{figure*}

\begin{figure*}[t]
\pgfplotstableread[col sep=semicolon,trim cells]{
    model   ; dataset    ; accuracy
    Llama2  ; HellaSwag  ; 0.581
    Gemma2B ; HellaSwag  ; 0.529
    Mamba   ; HellaSwag  ; 0.295
    Llama2  ; SocialIQA  ; 0.486
    Gemma2B ; SocialIQA  ; 0.517
    Mamba   ; SocialIQA  ; 0.401
    Llama2  ; OpenBookQA ; 0.308
    Gemma2B ; OpenBookQA ; 0.302
    Mamba   ; OpenBookQA ; 0.166
}\canonicaldata
\pgfplotstableread[col sep=semicolon,trim cells]{
    model   ; dataset    ; mixture ; alpha
    Llama2  ; HellaSwag  ; marcan  ; 0.3445
    Gemma2B ; HellaSwag  ; marcan  ; 0.3244
    Mamba   ; HellaSwag  ; marcan  ; 0.0201
    Llama2  ; SocialIQA  ; marcan  ; 0.5151
    Gemma2B ; SocialIQA  ; marcan  ; 0.0000
    Mamba   ; SocialIQA  ; marcan  ; 0.0000
    Llama2  ; OpenBookQA ; marcan  ; 0.6421
    Gemma2B ; OpenBookQA ; marcan  ; 0.4816
    Mamba   ; OpenBookQA ; marcan  ; 0.5753
    Llama2  ; HellaSwag  ; ncancan ; 0.3445
    Gemma2B ; HellaSwag  ; ncancan ; 0.3244
    Mamba   ; HellaSwag  ; ncancan ; 0.0201
    Llama2  ; SocialIQA  ; ncancan ; 0.5151
    Gemma2B ; SocialIQA  ; ncancan ; 0.0000
    Mamba   ; SocialIQA  ; ncancan ; 0.0000
    Llama2  ; OpenBookQA ; ncancan ; 0.6421
    Gemma2B ; OpenBookQA ; ncancan ; 0.4816
    Mamba   ; OpenBookQA ; ncancan ; 0.5753
}\canonicaldata

\def\names{{"HellaSwag", "SocialIQA", "OpenBookQA"}}%
\def\tasks{{"shellaswag", "ssocial_iqa", "sopenbookqa"}}%
\def\models{{"Llama2", "Gemma2B", "Mamba"}}%
\begin{tikzpicture}
\begin{groupplot}[
    group style={
        group size=3 by 3,
        ylabels at=edge left,
        x descriptions at=edge bottom,
        horizontal sep=0.9cm,
        vertical sep=0.15cm,
    },
    y tick label style={
        /pgf/number format/fixed,
        /pgf/number format/fixed zerofill,
        /pgf/number format/precision=1,
        /pgf/number format/skip 0.,
    },
    yticklabel={\pgfmathparse{\tick*100}\pgfmathprintnumber{\pgfmathresult}},
    xtick distance=0.25,
    width=0.35\textwidth,
    height=3.5cm,
    xmajorgrids=true,
    ymajorgrids=true,
    xminorgrids=true,
    yminorgrids=true,
    xlabel={$\alpha$},
    ylabel={\small{}Accuracy (\%)},
    grid=both,
    grid style=dashed,
    axis on top=false,
    title style={font={\scshape}},
]
    \pgfplotsforeachungrouped \c in {0,...,8}{%
        \pgfmathparse{int(mod(\c,3))}
        \pgfmathsetmacro\j{\pgfmathresult}
        \pgfmathparse{int(\c/3)}
        \pgfmathsetmacro\i{\pgfmathresult}
        \pgfmathsetmacro\m{\models[\i]}
        \pgfmathsetmacro\t{\tasks[\j]}
        \pgfmathsetmacro\n{\names[\j]}
        \def\datapath{tikz/data/alpha_mixture_\m_\t_val.txt}
        \pgfmathtruncatemacro{\lpos}{\i+\j*3}
        \pgfplotstablegetelem{\lpos}{accuracy}\of\canonicaldata
        \pgfmathsetmacro\canonacc{\pgfplotsretval}
        \edef\tmp{%
            \noexpand\ifthenelse{\i=0}{
                \noexpand\nextgroupplot[title={\n}]
            }{\noexpand\nextgroupplot}
            \noexpand\addplot[thick,line join=bevel,palette-blue] table[x=i,y=ncan_can] {\datapath};
            \noexpand\addplot[mark=none,very thick,dashed,palette-blue,domain=0:1] {\canonacc};
        }\tmp%
    }
\end{groupplot}
\begin{groupplot}[
    group style={
        group size=3 by 3,
        ylabels at=edge right,
        horizontal sep=0.9cm,
        vertical sep=0.15cm,
    },
    ticks=none,
    axis line style=transparent,
    xmin=0, xmax=1,
    ymin=0, ymax=1,
    width=0.35\textwidth,
    height=3.5cm,
    axis on top=false,
    grid=none,
    ylabel style={rotate=180,font={\tt}},
]
    \pgfplotsforeachungrouped \c in {0,...,8}{%
        \pgfmathparse{int(mod(\c,3))}
        \pgfmathsetmacro\j{\pgfmathresult}
        \pgfmathparse{int(\c/3)}
        \pgfmathsetmacro\i{\pgfmathresult}
        \pgfmathsetmacro\m{\models[\i]}
        \edef\tmp{%
            \noexpand\ifthenelse{\j=2}{
                \noexpand\nextgroupplot[ylabel={\m}]
            }{\noexpand\nextgroupplot}
        }\tmp%
    }
\end{groupplot}
\end{tikzpicture}
\caption{\textbf{Accuracy for mixture of canonical and non-canonical tokenizations.} Solid curves (\protect\tikz[baseline=-0.5ex]{\protect\draw[palette-blue,very thick] (0,0) -- (0.5,0);}) show accuracy for the mixture of non-canonical and canonical tokenizations across models and datasets. Dashed lines (\protect\tikz[baseline=-0.5ex]{\protect\draw[palette-blue,very thick,dashed] (0,0) -- (0.55,0);}) show the canonical baseline.}
\end{figure*}
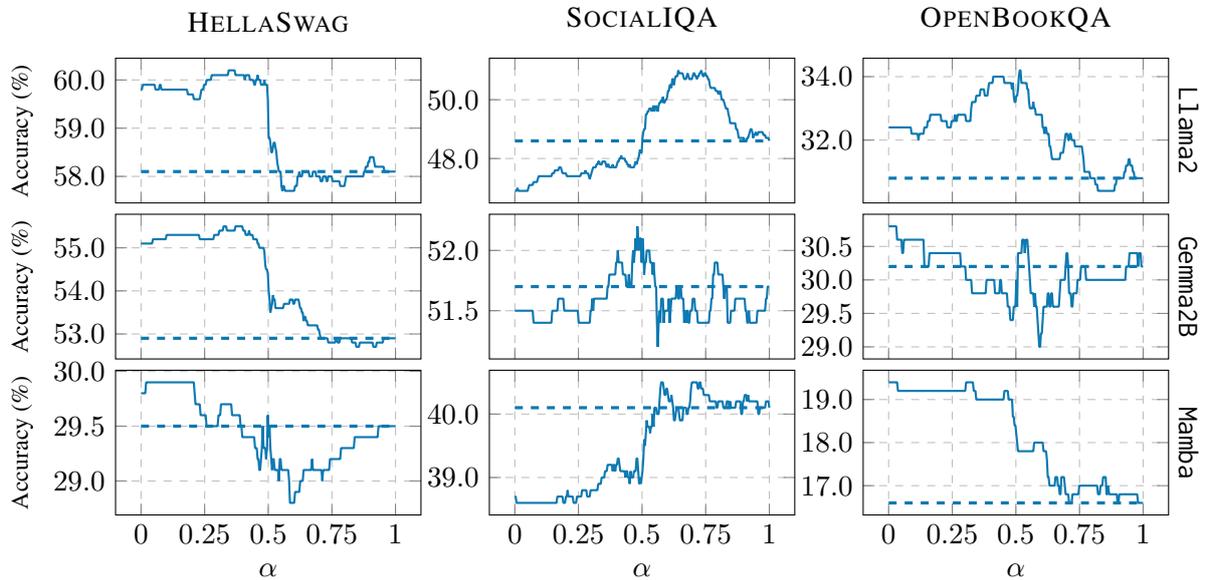

\end{document}